\begin{document}

\frontmatter

\mainmatter
\chapter[This is Chapter One Title containing authors and affiliations]{Manipulating Reinforcement Learning: Stealthy Attacks on Cost Signals\protect}

\author*[1]{Yunhan Huang}
\author[1]{Quanyan Zhu}

\address[1]{\orgdiv{Department
of Electrical and Computer Engineering}, 
\orgname{New York University}, 
\postcode{11201}, \countrypart{370 Jay Street}, 
     \city{Brooklyn}, \street{New York}, \country{USA}}%



\address*{Corresponding Author: Yunhan Huang; \email{yh.huang@nyu.edu}}

\maketitle

\begin{abstract}{Abstract}
This chapter studies emerging cyber attacks on reinforcement learning (RL) and introduces a quantitative approach to analyze the vulnerabilities of RL. Focusing on adversarial manipulation on the cost signals, we analyze the performance degradation of Temporal Difference (TD) learning and $Q$-learning algorithms under the manipulation. For TD learning, the approximation learned from the manipulated costs has an approximation error bounded by a constant times the magnitude of the attack. The effect of the adversarial attacks on the bound does not depend on the choice of $\lambda$, the weighting factor. In $Q$-learning, we show that $Q$-learning algorithms converge under stealthy attacks and bounded falsifications on cost signals. We characterize the relation between the falsified cost and the $Q$-factors as well as the policy learned by the learning agent which provides fundamental limits for feasible offensive and defensive moves. We propose a robust region in terms of the cost within which the adversary can never achieve the targeted policy. We provide conditions on the falsified cost which can mislead the agent to learn an adversary's favored policy. A case study of TD learning is provided to corroborate the results. 
\end{abstract}

\keywords{Reinforcement Learning, $Q$-Learning, TD-Learning, Deception and Counterdeception, Adversarial Learning.}

Reinforcement learning (RL) is a powerful paradigm for online decision-making in unknown environment. Recently, many advanced RL algorithms have been developed and applied to various scenarios including video games (e.g., \cite{mnih2015human}), transportation (e.g., \cite{arel2010reinforcement}), network security (e.g., \cite{huang2019adaptive,zhu2009dynamic}), robotics (e.g., \cite{kober2013reinforcement}) and critical infrastructures (e.g., \cite{ernst2004power}). However, the implementation of RL techniques requires accurate and consistent feedback from environment. It is straightforward to fulfill this requirement in simulation while in practice, accurate and consistent feedback from the environment is not guaranteed, especially in the presence of adversarial interventions. For example, adversaries can manipulate cost signals
by performing data injection attack and prevent an agent from receiving cost signals by jamming the communication channel. With inconsistent and/or manipulated feedback from environment, the RL algorithm can either fail to learn a policy or misled to a pernicious policy. The failure of RL algorithms under adversarial intervention can lead to a catastrophe to the system where the RL algorithm has been applied. For example, self-driving platooning vehicles can collide with each other when their observation data are manipulated (see \cite{behzadan2019adversarial}); drones equipped with RL techniques can be weaponized by terrorists to create chaotic and
vicious situations where they are commanded to collide to a crowd or a building (see \cite{huang2019deceptive,xu2015cyber}).

Hence, it is imperative to study RL with maliciously intermittent or manipulated feedback under adversarial intervention. First, it is important to understand the adversarial behaviors of the attacker. To do so, one has to establish a framework that characterizes the objective of the attacker, the actions available to the attacker and the information at his disposal. Secondly, it is also crucial to understand the impacts of the attacks on RL algorithms. The problems include how to measure the impacts, how to analyze the behavior of the RL algorithms under different types of attacks and how to mathematically and/or numerically compute the results of RL algorithms under attack. With the understanding of the adversarial behavior and the impacts of the adversarial behavior on RL algorithms, the third is to design defense mechanisms that can protect RL algorithms from being degenerated. This could be done by designing robust and secure RL algorithms that can automatically detect and discard corrupted feedback, deploying cryptographic techniques to ensure confidentiality, integrity and building backup communication channels to ensure availability. 

Despite the importance of understanding RL in malicious setting, very few works have studied RL with maliciously manipulated feedback or intermittent feedback. One type of related works studies RL algorithms under corrupted reward signals (see \cite{everitt2017reinforcement,wang2018reinforcement}). In \cite{everitt2017reinforcement}, the authors investigate RL problems where agents receive false rewards from environment. Their results show that reward corruption can impede the performance of agents, and can result in disastrous consequences for highly intelligent agents. Another type of work studies delayed evaluative feedback signals without the presence of malicious adversaries (see \cite{tan2008integrating,watkins1989learning}). The study of
RL under malicious attacks from a security point of view appeared in the recent past (\cite{huang2019deceptive,ma2019policy,lin2017tactics,behzadan2018faults}). In \cite{huang2019deceptive}, the authors study RL under malicious falsification on cost signals and introduces a quantitative framework of attack models to understand the vulnerabilities of RL. Ma et al. focuses on security threats on batch RL and control where attacker aims to poison the learned policy (\cite{ma2019policy}). \cite{lin2017tactics} and \cite{behzadan2018faults} focus on deep RL which involves deep natural networks (DNNs) for function approximation.

In this chapter, we first introduce RL techniques built on a Markov decision process framework and provide self contained background on RL before we discuss security problems of RL. Among RL techniques, of particular interest to us are two frequently used learning algorithms: TD learning and $Q$-learning. We then introduce general security concerns and problems in the field of RL. Security concerns arise from the fact that RL technique requires accurate and consistent feedback from environment, timely deployed controls and reliable agents, which are hard to guarantee under the presence of adversarial attacks. The discussion of general security concerns in this chapter invokes a large number of interesting problems yet to be done. In this chapter, we focus on one particular type of problems where the cost signals that the RL agent receives are falsified or manipulated as a result of adversarial attacks. In this particular type of problems, a general formulation of attack models is discussed by defining the objectives, information structure and the capability of the adversary. We analyze the attacks on both TD learning and $Q$-learning algorithms. We develop important results that tell the fundamental limits of the adversarial attacks. For TD learning, we characterize the bound on the approximation error that can be induced by the adversarial attacks on the cost signals. The choice of $\lambda$ does not impact the bound of the induced approximation error. In the $Q$-learning scenario, we aim to address two fundamental questions. The first is to understand the impact of the falsification of cost signals on the convergence of $Q$-learning algorithm. The second is to understand how the RL algorithm can be misled under the malicious falsifications. This chapter ends with an educational example that explains how the adversarial attacks on cost signals can affect the learning results in TD learning.

The rest of this chapter is organized as follows. Sec. \ref{IntroRL} gives a basic introduction of Markov decision process and RL techniques with a focus on TD($\lambda$) learning and $Q$-learning. In Sec. \ref{GeneralSecurityProblems}, we discuss general security concerns and problems in the field of RL. A particular type of attacks on the cost signals is studied on both TD($\lambda$) learning and $Q$-learning in Sec. \ref{RLManipulatedCost}. Sec. \ref{CaseStudy} comes an educational example that illustrates the adversarial attacks on TD($\lambda$) learning. Conclusions and future works are included in Sec. \ref{conc}.

\section{Introduction of Reinforcement Learning}\label{IntroRL}

Consider an RL agent interacts with an unknown environment and attempts to find the optimal policy minimizing the received cumulative costs. The environment is formalized as a Markov Decision Process (MDP) denoted by $\langle S, A, g, P, \alpha \rangle$. The MDP has a finite state space denoted by $S$. Without loss of generality, we assume that there are $n$ states and $S=\{1,2,\cdots,n\}$. The state transition depends on a control. The control space is also finite and denoted by $A$. When at state $i$, the control must be chosen from a given finite subset of $A$ denoted by $U(i)$. At state $i$, the choice of a control $u\in U(i)$ determines the transition probability $p_{ij}(u)$ to the next state $j$. The state transition information is encoded in $P$. The agent receives a running cost that accumulates additively over time and depends on the states visited and the controls chosen. At the $k$th transition, the agent incurs a cost $\alpha^k g(i,u,j)$, where $g:S\times A\times S \rightarrow \mathbb{R}$ is a given real-valued function that describes the cost associated with the states visited and the control chosen, and $\alpha\in(0,1]$ is a scalar called the discount factor. 

The agent is interested in policies, i.e., sequences $\pi = \{\mu_0,\mu_1,\cdots\}$ where $\mu_k:S\rightarrow A, k=0, 1, \cdots,$ is a function mapping states to controls with $\mu_k(i)\in U(i)$ for all states $i$. Denote $i_k$ the state at time $k$. Once a policy $\pi$ is fixed, the sequence of states $i_k$ becomes a Markov chain with transition probabilities $P(i_{k+1}=j|i_k=i)= p_{ij}(\mu_k(i))$. In this chapter, we consider infinite horizon problems, where the cost accumulates indefinitely. In the infinite horizon problem, the total expected cost starting from an initial state $i$ and using a policy $\pi = \{\mu_0,\mu_1,\cdots\}$ is 
$$
J^\pi(i) = \lim_{N\rightarrow \infty } E\left[ \sum_{k=0}^N \alpha_k g\left(i_k,\mu_k(i_k),i_{k+1}\right) \;\middle|\;i_0= i \right],
$$
the expected value is taken with respect to the probability distribution of the Markov chain $\{i_0,i_1,i_2,\cdots\}$. This distribution depends on the initial state $i_0$ and the policy $\pi$. The optimal cost-to-go starting from state $i$ is denoted by $J^*(i)$; that is,
$$
J^*(i)  = \min_\pi J^\pi (i).
$$
We can view the costs $J^*(i),i=1,2,\cdots$, as the components of a vector $J^*\in \mathbb{R}^n$. Of particular interest in the infinite-horizon problem are stationary policies, which are policies of the form 
$\pi=\{\mu,\mu,\cdots\}$. The corresponding cost-to-go vector is denoted by $J^\mu \in \mathbb{R}^n$.

The optimal infinite-horizon cost-to-go functions $J^*(i),i=1,2,3,\cdots$, also known as value functions, arise as a central component of algorithms as well as performance metrics in many statistics and engineering applications. Computation of the value functions relies on solving a system of equations
\begin{equation}\label{BellmanEqn}
J^*(i) = \min_{u\in U(i)} \sum_{j=1}^n p_{ij}(u)(g(i,u,j)+\alpha J^*(j)),\ \ \ i=1,2,\cdots,n,
\end{equation}
referred to as Bellman's equation (\cite{bertsekas1996neuro,sutton1998introduction}), which will be at the center of analysis and algorithms in RL. If $\mu(i)$ attains the minimum in the right-hand side of Bellman's equation (\ref{BellmanEqn}) for each $i$, then the stationary policy $\mu$ should be optimal (\cite{bertsekas1996neuro}). That is, for each $i\in S$,
$$
\mu^*(i) = \arg\min_{\mu(i)\in U(i)} \sum_{j=1}^n p_{ij}\left(\mu(i)\right) \left(  g(i,\mu(i),j) + \alpha J^*(j) \right).
$$

The efficient computation or approximation of $J^*$ and an optimal policy $\mu^*$ is the major concern of RL. In MDP problems where the system model is known and the state space is reasonably large, value iteration, policy iteration, and linear programming are the general approaches to find the value function and the optimal policy. Readers unfamiliar with these approaches can refer to Chapter 2 of \cite{bertsekas1996neuro}. 

It is well-known that RL refers to a collection of techniques for solving MDP under two practical issues. One is the overwhelming computational requirements of solving Bellman's equations because of a colossal amount of states and controls, which is often referred to as Bellman's ``curse of dimensionality''. In such situations, an approximation method is necessary to obtain sub-optimal solutions. In approximation methods, we replace the value function $J^*$ with a suitable approximation $\tilde{J}(r)$, where $r$ is a vector of parameters which has much lower dimension than $J^*$. There are two main function approximation architectures: linear and nonlinear approximations. The approximation architecture is linear if $\tilde{J}(r)$ is linear in $r$. Otherwise, the approximation architecture is nonlinear. Frequently used nonlinear approximation methods include polynomials based approximation, wavelet based approximation, and approximation using neural network. The topic of deep reinforcement learning studies the cases where approximation $\tilde{J}(r)$ is represented by a deep neural network (\cite{mnih2015human}).

Another issue comes from the unavailability of the environment dynamics; i.e., the  transition probability is either unknown or too complex to be kept in memory. In this circumstance, one alternative is to simulate the system and the cost structure. With given state space $S$ and the control space $A$, a simulator or a computer that generates a state trajectory using the probabilistic transition from any given state $i$ to a generated successor state $j$ for a given control $u$. This transition accords with the transition probabilities $p_{ij}(u)$, which is not necessarily known to the simulator or the computer. Another alternative is to attain state trajectories and corresponding costs through experiments. Both methods allow the learning agent to observe their own behavior to learn how to make good decisions.  
It is clearly feasible to use repeated simulations to find the approximate of the transition model of the system $P$ and the cost functions $g$ by averaging the observed costs. This approach is usually referred to as model-based RL. As an alternative, in model-free RL, transition probabilities are not explicitly estimated, but instead the value function or the approximated value function of a given policy is progressively calculated by generating several sample system trajectories and associated costs. Of particular interest to us in this chapter is the security of model-free RL. This is because firstly, model-free RL approaches are the most widely applicable and practical approaches that have been extensively investigated and implemented; secondly, model-free RL approaches receive observations or data from environment successively and consistently. This makes model-free RL approaches vulnerable to attacks. An attack can induce an accumulative impact on succeeding learning process. The most frequently used algorithms in RL are TD learning algorithms and $Q$-learning algorithms. Hence, in this chapter, we will focus on the security problems of these two learning algorithms as well as their approximate counterparts. 

\subsection{TD Learning}
Temporal difference (TD) learning is an implementation of the Monte Carlo policy evaluation algorithm that incrementally updates the cost-to-go estimates of a given policy $\mu$, which is an important sub-class of general RL methods. TD learning algorithms, introduced in many references, including \cite{bertsekas1996neuro}, \cite{sutton1998introduction} and \cite{tsitsiklis1997analysis}, generates an infinitely long trajectory of the Markov Chain $\{i_0,i_1,i_2,\cdots\}$ from simulator or experiments by fixing a policy $\mu$, and at time $t$ iteratively updates the current estimate $J^\mu_t$ of $J^\mu$ using an iteration that depends on a fixed scalar $\lambda\in [0,1]$, and on the temporal difference 
$$
d_t(i_k,i_{k+1}) = g(i_k,i_{k+1}) + \alpha  J^\mu_t (i_{k+1}) - J^\mu_t(i_k), \forall t =0,1,\cdots, \forall k\leq t.
$$

The incremental updates of TD($\lambda$) have many variants. In the most straightforward implementation of TD($\lambda$), all of the updates are carried out simultaneously after the entire trajectory has been simulated. This is called the off-line version of the algorithm. On the contrary, in the on-line implementation of the algorithm, the estimates update once at each transition. Under our discount MDP, a trajectory may never end. If we use an off-line variant of TD($\lambda$), we may have to wait infinitely long before a complete trajectory is obtained. Hence, in this chapter, we focus on an on-line variant. The update equation for this case becomes 
\begin{equation}\label{TDLearningUpdates}
J^\mu_{t+1}(i) = J^\mu_t(i) + \gamma_t(i) z_t(i) d_t(i),\ \ \ \forall i,
\end{equation}
where the $\gamma_t(i)$ are non-negative stepsize coefficients and $z_t(i)$ is the eligibility coefficients defined as
\begin{equation*}
z_t(i) = \begin{cases}
\alpha \lambda z_{t-1}(i),\ \ \ \textrm{if }i_t\neq i,\\
\alpha \lambda z_{t-1}(i)+1,\ \ \ \textrm{if }i_t = i.\\
\end{cases}
\end{equation*}
This definition of eligibility coefficients gives the every-visit TD($\lambda$) method. In every-visit TD($\lambda$) method, if a state is visited more than once by the same trajectory, the update should also be carried out more than once.

Under a very large number of states or controls, we have to resort to approximation methods. Here, we introduce TD($\lambda$) with linear approximation architectures. We consider a linear parametrization of the form: 
$$
\tilde{J}(i,r) = \sum_{k=1}^K r(k) \phi_k (i).
$$
Here, $r= (r(1),\dots,r(K))$ is a vector of tunable parameters and $\phi_k(\cdot)$ are fixed scalar functions defined on the state space. The form can be written in a compact form:
$$
\tilde{J}(r) = (\tilde{J}(1,r),\dots,\tilde{J}(n,r))= \Phi r,
$$
where
$$
\Phi = \begin{bmatrix}
| & \ & | \\
\phi_1 & \cdots  & \phi_K \\
| & \ & |\\
\end{bmatrix}= 
\begin{bmatrix}
\textrm{---} & \phi'(1) & \textrm{---} \\
\cdots & \cdots & \cdots \\
\textrm{---} & \phi'(n) & \textrm{---}\\ 
\end{bmatrix},
$$
with $\phi_k= (\phi_k(1),\dots,\phi_k(n))$ and $\phi(i)=(\phi_1(i),\dots,\phi_K(i))$. We assume that $\Phi$ has linearly independent columns. Otherwise, some components of $r$ would be redundant.

Let $\eta_t$ be the eligibility vector for the approximated TD($\lambda$) problem which is of dimension $K$. With this notation, the approximated TD($\lambda$) updates are given by
\begin{equation}\label{ApproximateTDupdate}
r_{t+1} = r_t + \gamma_t d_t \eta_t,
\end{equation}
where
\begin{equation}\label{ApproximateEligibility}
\eta_{t+1} = \alpha \lambda \eta_t + \phi(i_{t+1}).
\end{equation}
Here, $d_t = g(i_t,i_{t+1}) + \alpha r_{t}' \phi(i_{t+1}) - r_t'\phi(i_t)$.

The almost-sure convergence of $r_t$ generated by (\ref{ApproximateTDupdate}) and (\ref{ApproximateEligibility}) is guaranteed if the conditions in Assumption 6.1 in \cite{bertsekas1996neuro} hold. It will converge to the solution of 
$$
Ar +b =0,
$$
where $A=\Phi ' D (M-I)\Phi$ and $b= \Phi'D q$. Here, $D$ is a diagonal matrix with diagonal entries $d(1),d(2),\cdots,d(n)$, and $d(i)$ is the steady-state probability of state $i$; the matrix $M$ is given by $M=(1-\lambda)\sum_{m=0}^\infty \lambda^m (\alpha P_{\mu})^{m+1}$ and the vector $b$ is given by $b=\Phi'D q$ with $q=\sum_{m=0}^\infty (\alpha \lambda P_\mu)^m \bar{g}$, where $\bar{g}$ is a vector in $\mathbb{R}^n$ whose $i$th component is given by $\bar{g}(i)=\sum_{j=1}^n p_{ij}(\mu(i))g(i,\mu(i),j)$. The matrix $P_\mu$ is the transition matrix defined by $P_\mu\coloneqq [P_{\mu}]_{i,j} = p_{ij}(\mu(i))$. A detailed proof of the convergence is provided in \cite{tsitsiklis1997analysis} and \cite{bertsekas1996neuro}.

Indeed, TD($\lambda$) with linear approximations is a much more general framework. The convergence of $J^\mu_t$ in TD($\lambda$) without approximation follows immediately if we let $K=n$ and $\Phi=I_n$ where $I_n$ is $n\times n$ identity matrix. 

\subsection{Q-Learning}

$Q$-learning method is initially proposed in \cite{watkins1992q} which updates estimates of the $Q$-factors associated with an optimal policy. $Q$-learning is proven to be an efficient computational method that can be used whenever there is no explicit model of the system and the cost structure. First, we introduce the first notion of the $Q$-factor of a state-control pair $(i,u)$, defined as
\begin{equation}\label{DefineQfactor}
Q(i,u) = \sum_{j=0}^n p_{ij}(u)(g(i,u,j)+ \alpha J(j)).
\end{equation}
The optimal $Q$-factor $Q^*(i,u)$ corresponding to a pair $(i,u)$ is defined by (\ref{DefineQfactor}) with $J(j)$ replaced by $J^*(j)$. It follows immediately from Bellman's equation that
\begin{equation}\label{QBellmanEquation}
Q^*(i,u) = \sum_{j=0}^n p_{ij}(u) \left( g(i,u,j) + \alpha \min_{v\in U(j)} Q^*(j,v)\right).
\end{equation}
Indeed, the optimal $Q$-factors $Q^*(i,u)$ are the unique solution of the above system by Banach fixed-point theorem (see \cite{kreyszig1978introductory}). 

Basically, $Q$-learning computes the optimal $Q$-factors based on samples in the absence of system model and cost structure. It updates the $Q$-factors following
\begin{equation}\label{QlearningUpdate}
Q_{t+1}(i,u) = (1-\gamma_t)Q_t(i,u) + \gamma_t  \left( g(i,u,\bar{i})  + \alpha \min_{v\in U(\bar{i})} Q_t(\bar{i},v)  \right),
\end{equation}
where the successor state $\bar{i}$ and $g(i,u,\bar{i})$ is generated from the pair $(i,u)$ by simulation or experiments, i.e., according to the transition probabilities $p_{i\bar{i}}(u)$. For more intuition and interpretation about $Q$-learning algorithm, one can refer to Chapter 5 of \cite{bertsekas1996neuro}, \cite{sutton1998introduction} and \cite{watkins1992q}.

If we assume that stepsize $\gamma_t$ satisfies $\sum_{t=0}^\infty \gamma_t = \infty$ and $\sum_{t=0}^\infty \gamma_t^2 < \infty$, we obtain the convergence of $Q_t(i,u)$ generated by (\ref{QlearningUpdate}) to the optimal $Q$-factors $Q^*(i,u)$. A detailed proof of convergence in provided in  \cite{borkar2000ode} and Chapter 5 of \cite{bertsekas1996neuro}.

\section{Security Problems of Reinforcement Learning}\label{GeneralSecurityProblems}

Understanding adversarial attacks on RL systems is essential to develop effective defense mechanisms and an important step toward trustworthy and safe RL. The reliable implementation of RL techniques usually requires accurate and consistent feedback from the environment, precisely and timely deployed controls to the environment and reliable agents (in multi-agent RL cases). Lacking any one of the three factors will render failure to learn optimal decisions. These factors can be used by adversaries as gateways to penetrate RL systems. It is hence of paramount importance to understand and predict general adversarial attacks on RL systems targeted at the three doorways. In these attacks, the miscreants know that they are targeting RL systems, and therefore, they tailor their attack strategy to mislead the learning agent. Hence, it is natural to start with understanding the parts of RL that adversaries can target at.

\begin{figure}
    \centering
    \includegraphics[width=0.9\linewidth]{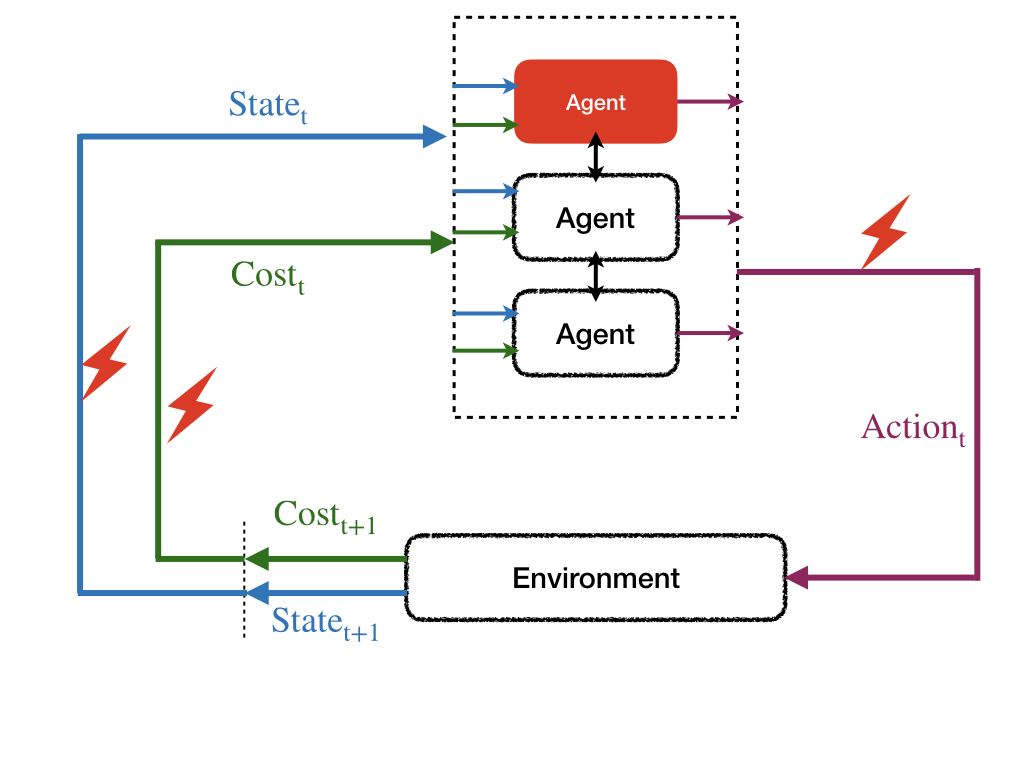}
    \caption{Attacks on RL systems: Adversaries can manipulate actions, costs, and state signals in the feedback system. Agents can also be compromised in the learning process.}
    \label{fig:GeneralAttack}
\end{figure}

Fig. \ref{fig:GeneralAttack} illustrates different types of attacks on the RL system. One type of attack aims at the state which is referred to as state attacks. Attacks on state signals can happen if the remote sensors in the environment are compromised or the communication channel between the agent and the sensors is jammed or corrupted. In such circumstances, the learning agent may receive a false state observation $\tilde{i}_k$ of the actual state $i_k$ at time $k$ and/or may receive a delayed observation of the actual state or even never receive any information regarding the state at time $k$. An example of effortless state attacks is sequential blinding/blurring of the cameras in a deep RL-based autonomous vehicle via lasers/dirts on lens, which can lead to learning false policies and hence lead to catastrophic consequences. Based on its impact on the RL systems, state attacks can be classified into two groups: i) Denial of Service (DoS); and ii) integrity attacks. The main purpose of DoS attacks is to deny access to sensor information. Integrity attacks are characterized by the modification of sensor information, compromising their integrity.

Another type of attacks targets at cost signals. In this type of attacks, the adversary aims to corrupt the cost signals that the learning agent has received with a malicious purpose of misleading the agent. Instead of receiving the actual cost signal $g_k = g(i_k,u_k,j_{k+1})$ at time $k$, the learning agent receives a manipulated or falsified cost signal $\tilde{g}_k$. The corruption of cost signals comes from the false observation of the state in cases where the cost is predetermined by the learning agent. In other cases where cost signals are provided directly by the environment or a remote supervisor, the cost signal can be corrupted independently from the observation of the state. The learning agent receives falsified cost signals even when the observation of the state is accurate. In the example of autonomous vehicle, if the cost depends on the distance of the deep RL agent to a destination as measure by GPS coordinates, spoofing of GPS signals by the adversary may result in incorrect reward signals, which can translate to incorrect navigation policies (See \cite{behzadan2018faults}). 

The corruption of cost signals may lead to a doomed policies. For example, \cite{clarkfaulty} has trained an RL agent on a boat racing game. High observed reward misleads the agent to go repeatedly in a circle in a small lagoon and hit the same targets, while losing every race. Or even worse, the learning agent may be misled to  lethal policies that would result in self-destruction.

The learning agent influences the environment by performing control $u$ via actuators. There is a type of attacks targeting the actuators. If the actuator is exposed to the adversary, the control performed will be different than the one determined by the agent. Hence, during the learning process, the agent will learn a deteriorated transition kernel and a matched reward function. An example of this type of attack is RL-based unmanned aerial vehicles may receive corrupted control signals which would cause malfunctioning of its servo actuators.

To understand an attack model, one needs to specify three ingredients: objective of an adversary, actions available to the adversary, and information at his disposal. The objective of an adversary describes the goal the adversary aims to achieve out of the attacks. Objectives of an adversary include but not limit to maximizing the agent's accumulative costs, misleading the agent to learn certain policies, or diverging the agent's learning algorithm. To achieve the objective, adversaries need a well-crafted strategy. An attack strategy is developed based on the information and the actions at his disposal. An attack strategy is a map from the information space to the action space. The information structure of the adversary describes the knowledge the adversaries have during the learning and the attack process. Whether the adversaries know which learning algorithms the agent implements, whether the adversaries know the actual state trajectory, controls and costs are decisive for the adversaries to choose their strategies. The scope of the attack strategy also depends on the actions at the adversary's disposal. Due to adversaries' capabilities, adversaries can only launch certain attacks. For example, some adversaries can only manipulate the costs induced at certain states or add white noise to the observations. 

In next section, we focus on cases where adversaries target at the cost signals received by the agent. We analyze the performance of TD learning and $Q$-learning under manipulated costs, respectively, and provide fundamental limits under the attacks on cost signals.

\section{Reinforcement Learning with Manipulated Cost Signals}\label{RLManipulatedCost}
 
Under malicious attacks on cost signals as we have discussed in Sec. \ref{GeneralSecurityProblems}, the RL agent will fail to observe the actual cost feedback from the environment. Instead, the agent receives a cost signal that might be falsified by the adversary. Consider the following MDP with falsified cost denoted by $\langle S,A, g,\tilde{g},P,\alpha \rangle$. Under the falsification, the agent, instead of receiving the actual cost signal $g_t\in \mathbb{R}$ at the $t$-th update, observes a falsified cost signal denoted by $\tilde{g}_t\in\mathbb{R}$. The remaining aspects of the MDP framework stay the same. The adversary's task here is to design falsified cost signals $\tilde{g}$ based on his information structure and the actions available to him so that he can achieve certain objectives.

Suppose the adversary is an insider and he knows what the agent knows at time $t$. In other words, at time $t$, the adversary knows the state trajectory, the control trajectory and the cost signals up to time $t$. The adversary may or may not know the system model. Suppose that the adversary falsifies the cost signals in a stealthy way. In this case, the cost associated with each state-control-state triple is consistently falsified in the same way. The following definition gives a formal definition of the attack.

\begin{definition}
\textbf{(Stealthy Attacks)} If $\tilde{g}_t$ takes the same value for the same state-control-state triple for all $t$; i.e., for $t\neq \tau$, we have $\tilde{g}_t=\tilde{g}_\tau$ if $(i_t,u_t,j_{t+1})=(i_\tau,u_\tau,j_{\tau+1})$, then we say the attacks on the cost signals are stealthy.
\end{definition}

Under stealthy attacks, the falsified cost signals can be given by a function $\tilde{g}:S\times A \times S\rightarrow \mathbb{R}$. Then, at time $t$, the falsified cost the agent receives is $\tilde{g}_t= \tilde{g}(i_t,u_t,i_{t+1})$. Since the transition from $i_{t}$ to $i_{t+1}$ depends on the transition probability, without loss of generality, we consider only the cost structure defined on the state-control pair, i.e., $g(i,u)$ for $i\in S, u\in A$.  

\subsection{TD Learning with Manipulated Cost Signals}

If the RL agent updates the estimates of the cost-to-go function of a given policy $\mu$ according to (\ref{ApproximateTDupdate}) and (\ref{ApproximateEligibility}), then under the stealthy attacks, the algorithm can be written as 
\begin{equation}\label{FalfisiedApproTDUpdate}
\begin{aligned}
\tilde{r}_{t+1} &= \tilde{r}_t + \gamma_t \tilde{d}_t \eta_t,\\
\eta_{t+1} &= \alpha \lambda \eta_t + \phi(i_{t+1}),
\end{aligned}
\end{equation}
where $\tilde{d}_t = \tilde{g}(i_{t},u_t) + \alpha \tilde{r}_{t}' \phi(i_{t+1}) - \tilde{r}_t'\phi(i_t)$. 

Suppose the sequence of parameters $\{\tilde{r}_t\}$ generated by (\ref{FalfisiedApproTDUpdate}) under the falsified cost signals is convergent and converges to $\tilde{r}^*$. (We will show the conditions under which the convergence of $\{\tilde{r}_t\}$ is guaranteed.) Let $r^*$ be the solution of $Ar+b$. In TD($\lambda$), the agent aims to estimate the cost-to-go function $J^\mu$ of a given policy $\mu$. In approximated TD($\lambda$) with linear approximation architecture, $\tilde{J}(i,r)= \phi'(i)r$ serves as an approximation of $J^\mu(i)$ for $i\in S$. One objective of the adversary can be to deteriorate the approximation and estimation of $J^\mu$ by manipulating the costs. One way to achieve the objective is to let $\tilde{r}_t$ generated by (\ref{FalfisiedApproTDUpdate}) converge to $\tilde{r}^*$ such that $\Phi'\tilde{r}^*$ is as a worse approximate of $J^\mu$ as possible.

\begin{lemma}\label{ConvergenceFalTD}
If the sequence of parameters $\{\tilde{r}_t\}$ is generated by the TD($\lambda$) learning algorithm (\ref{FalfisiedApproTDUpdate}) with stealthy and bounded attacks on the cost signals, then the sequence $\{\tilde{r}_t\}$ converges to $\tilde{r}^*$ and $\tilde{r}^*$ is a unique solution of $A r +\tilde{b}=0$, where $\tilde{b}= \Phi' D \sum_{m=0}^\infty (\alpha \lambda P_\mu)^m \tilde{g}$ and $\tilde{g}$ is vector whose $i$-th component is $\tilde{g}(i,\mu(i))$.
\end{lemma}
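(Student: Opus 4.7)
The plan is to reduce the problem to the already-established convergence result for ordinary TD($\lambda$) with linear function approximation, and simply re-read the conclusion with $\tilde g$ in place of $g$. The key observation is structural: under a stealthy attack, the falsified cost is a fixed deterministic function $\tilde g : S \times A \to \mathbb{R}$, so the update equations (\ref{FalfisiedApproTDUpdate}) are \emph{formally identical} to (\ref{ApproximateTDupdate})--(\ref{ApproximateEligibility}), with the only change being that the scalar $g(i_t, i_{t+1})$ has been replaced by $\tilde g(i_t, u_t) = \tilde g(i_t, \mu(i_t))$. Crucially, the transition matrix $P_\mu$, the steady-state distribution $\pi$ (and hence $D$), the feature matrix $\Phi$, the eligibility dynamics, and the stepsize sequence are all unchanged.

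First, I would verify that the hypotheses underlying the convergence theorem (Assumption 6.1 in \cite{bertsekas1996neuro}) continue to hold. The Markov chain $\{i_t\}$ induced by the fixed policy $\mu$ is untouched by the attack, so the assumptions on the chain (irreducibility, aperiodicity, steady-state distribution $\pi$) are inherited. The stepsize assumptions are unchanged. The only remaining nontrivial condition is that the ``cost'' driving the iteration be bounded; this is exactly guaranteed by the hypothesis that the attacks are bounded, since $|\tilde g(i,\mu(i))| < \infty$ on the finite set $S$. Hence the convergence theorem for linear TD($\lambda$) applies verbatim to (\ref{FalfisiedApproTDUpdate}).

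Second, I would identify the limit. The convergence theorem says $\tilde r_t$ converges almost surely to the unique solution of a linear equation whose matrix is $A = \Phi' D (M-I)\Phi$ (depending only on $P_\mu, \lambda, \alpha$, and $\Phi$, none of which are affected by the attack) and whose affine term is built from the cost vector. Replacing $\bar g$ by $\bar{\tilde g}$, where $\bar{\tilde g}(i) = \tilde g(i,\mu(i))$, the affine term becomes
\[
\tilde b \;=\; \Phi' D \sum_{m=0}^\infty (\alpha \lambda P_\mu)^m \, \tilde g,
\]
which is exactly the expression in the statement. Uniqueness of $\tilde r^*$ follows from the fact that $A$ is negative definite (as established in the original proof of convergence for linear TD($\lambda$) in \cite{tsitsiklis1997analysis}), again independent of the cost vector.

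The main potential obstacle is making sure that no step in the referenced convergence proof implicitly uses properties of the true cost $g$ beyond boundedness, e.g., a particular stochastic structure. Because the cost enters the stochastic approximation analysis only through the drift term of the associated ODE, and that drift is linear in the reward vector with a cost-independent contraction factor, the substitution $g \mapsto \tilde g$ is benign. Thus the argument amounts to invoking the original theorem with a relabeled cost, and the lemma follows.
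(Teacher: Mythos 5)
Your proposal is correct and takes essentially the same route as the paper: the paper's proof is precisely the observation that the argument of Proposition 6.4 in \cite{bertsekas1996neuro} goes through verbatim with $g(i,\mu(i),j)$ replaced by $\tilde{g}(i,\mu(i))$, since a stealthy bounded attack leaves the Markov chain, the matrix $A$, and all stochastic-approximation hypotheses intact. Your additional checks (boundedness of $\tilde g$, unchanged steady-state distribution, negative definiteness of $A$ for uniqueness) are exactly the points the cited proof relies on.
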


The proof of Lemma \ref{ConvergenceFalTD} follows the proof of Proposition 6.4 in \cite{bertsekas1996neuro} with $g(i,\mu(i),j)$ replaced by $\tilde{g}(i,\mu(i))$. If the adversary performs stealthy and bounded attacks, he can mislead the agent to learn an approximation $\Phi' \tilde{r}^*$ of $J^\mu$. The distance between $\Phi' \tilde{r}^*$ and $J^\mu$ with respect a norm $\Vert \cdot \Vert$ is what the adversary aims to maximize. The following lemma provides an upper bound of the distance between $\Phi' \tilde{r}^*$ and $J^\mu$.

\begin{lemma}\label{BoundFalApproximationError}
Suppose that $\tilde{r}^*$ is the parameters learned from the manipulated TD($\lambda$) (\ref{FalfisiedApproTDUpdate}). Then, the approximation error under the manipulated satisfies 
\begin{equation}\label{ApproErrorBound}
\Vert \Phi' \tilde{r}^* - J^ \mu \Vert_D \leq \Vert \Phi'\tilde{r}^* - \Phi' r^* \Vert_D + \frac{1-\lambda \alpha}{\sqrt{(1-\alpha)(1+\alpha -2\lambda \alpha)}} \Vert (\Pi-I) J^\mu\Vert_D,
\end{equation}
where $\Vert\cdot\Vert_D$ is the weighted quadratic norm defined by $\Vert J \Vert_D^2 = J' D J = \sum_{i=1}^n d(i) J(i)^2$ and $\Pi = \Phi (\Phi' D \Phi)^{-1} \Phi' D$.
\end{lemma}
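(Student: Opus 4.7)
The plan is to decompose the error via the triangle inequality and then invoke the classical approximation-error bound for unmanipulated TD($\lambda$) with linear function approximation from \cite{tsitsiklis1997analysis} and Chapter 6 of \cite{bertsekas1996neuro}. Writing
$$
\Phi' \tilde{r}^* - J^\mu \;=\; \bigl(\Phi' \tilde{r}^* - \Phi' r^*\bigr) + \bigl(\Phi' r^* - J^\mu\bigr)
$$
and applying the triangle inequality in $\Vert\cdot\Vert_D$ immediately produces the first summand on the right-hand side of (\ref{ApproErrorBound}). What remains is to show
$$
\Vert \Phi' r^* - J^\mu \Vert_D \;\leq\; \frac{1-\lambda\alpha}{\sqrt{(1-\alpha)(1+\alpha-2\lambda\alpha)}}\;\Vert (\Pi-I) J^\mu \Vert_D.
$$

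For this second step, I would recall that $r^*$ is the unique fixed point of the projected multistep Bellman operator $\Pi T^{(\lambda)}$, where $T^{(\lambda)}$ is the $\lambda$-weighted Bellman operator associated with policy $\mu$. The key fact, established in \cite{tsitsiklis1997analysis}, is that $T^{(\lambda)}$ is a contraction in $\Vert\cdot\Vert_D$ (with $D$ the stationary distribution of $P_\mu$) with modulus
$$
\kappa \;=\; \frac{\alpha(1-\lambda)}{1-\alpha\lambda},
$$
and $\Pi$ is a $D$-orthogonal projection and hence nonexpansive, so $\Pi T^{(\lambda)}$ is a contraction with the same modulus. A standard Pythagorean decomposition along $\Phi' r^* - \Pi J^\mu$ and $\Pi J^\mu - J^\mu$, combined with $\Phi' r^* = \Pi T^{(\lambda)}(\Phi' r^*)$ and $J^\mu = T^{(\lambda)} J^\mu$, gives
$$
\Vert \Phi' r^* - J^\mu \Vert_D^2 \;\leq\; \Vert \Pi J^\mu - J^\mu \Vert_D^2 + \kappa^2 \Vert \Phi' r^* - J^\mu \Vert_D^2.
$$
Solving for the left-hand side produces the prefactor $1/\sqrt{1-\kappa^2}$, and an elementary algebraic simplification, using
$$
(1-\alpha\lambda)^2 - \alpha^2(1-\lambda)^2 \;=\; (1-\alpha)(1+\alpha-2\alpha\lambda),
$$
converts $1/\sqrt{1-\kappa^2}$ into the constant $(1-\lambda\alpha)/\sqrt{(1-\alpha)(1+\alpha-2\lambda\alpha)}$ appearing in (\ref{ApproErrorBound}).

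I do not expect any genuine obstacle. The only conceptual point worth checking is that the entire attack-dependent deviation is absorbed into the first summand: by Lemma~\ref{ConvergenceFalTD}, $\tilde{r}^*$ and $r^*$ both solve an equation of the form $Ar + (\cdot) = 0$ with the \emph{same} matrix $A$, so the contraction/projection structure of the projected Bellman operator is unaffected by the stealthy attack, and the classical bound applies verbatim to the unmanipulated piece $\Phi' r^* - J^\mu$. The potentially fiddly step is just the algebraic simplification of $1/\sqrt{1-\kappa^2}$ into the stated closed form, which is routine.
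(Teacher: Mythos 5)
Your proposal is correct and follows essentially the same route as the paper: the same triangle-inequality decomposition into $\Vert \Phi'\tilde{r}^* - \Phi' r^*\Vert_D$ plus $\Vert \Phi' r^* - J^\mu\Vert_D$, with the second term controlled by the classical TD($\lambda$) approximation bound and the identification $\min_r \Vert \Phi r - J^\mu\Vert_D = \Vert (\Pi - I)J^\mu\Vert_D$. The only difference is cosmetic: the paper simply cites Theorem 1 of \cite{tsitsiklis1997analysis} for the constant, whereas you re-derive it from the contraction modulus $\kappa = \alpha(1-\lambda)/(1-\alpha\lambda)$ and the Pythagorean identity, correctly recovering $(1-\lambda\alpha)/\sqrt{(1-\alpha)(1+\alpha-2\lambda\alpha)} = 1/\sqrt{1-\kappa^2}$.
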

\begin{proof}
A direct application of triangle inequality gives us
$$
\Vert \Phi'\tilde{r}^* - J^\mu \Vert = \Vert \Phi' \tilde{r}^* - \Phi' r^* + \Phi' r^* - J^\mu \Vert \leq \Vert \Phi' \tilde{r}^* - \Phi' r^* \Vert + \Vert \Phi' r^* - J^\mu \Vert.
$$
This indicates that the distance between $\Phi' \tilde{r}^*$ and $J^\mu$ is bounded by the distance between the falsified approximation $\Phi' \tilde{r}^*$ of $J^\mu$ and the true approximation $\Phi' r^*$ of $J^\mu$ plus the approximation error $\Vert \Phi' r^* - J^\mu \Vert$.
Moreover, we know from Theorem 1 in \cite{tsitsiklis1997analysis} that 
\begin{equation}\label{TDApproErrorUnderTrueCost}
\Vert \Phi' r^* - J^ \mu \Vert_D \leq \frac{1-\lambda \alpha}{\sqrt{(1-\alpha)(1+\alpha -2\lambda \alpha)}} \min_r \Vert \Phi' r - J^\mu \Vert_D.
\end{equation}
From Lemma 6.8, we know there exists $\Pi=\Phi (\Phi' D \Phi)^{-1} \Phi' D$ such that for every vector $J$, we have $\Vert \Pi J - J \Vert_D = \min_r \Vert \Phi r - J \Vert_D$. Hence, we arrive at 
$$
\Vert \Phi' \tilde{r}^* - J^ \mu \Vert_D \leq \Vert \Phi'\tilde{r}^* - \Phi' r^* \Vert_D + \frac{1-\lambda \alpha}{\sqrt{(1-\alpha)(1+\alpha -2\lambda \alpha)}} \Vert (\Pi-I) J^\mu\Vert_D.
$$\qed
\end{proof}

Note that $\Vert (\Pi - I) J^\mu \Vert$ can be further bounded by $\Vert (\Pi -I) J^\mu \Vert = \Vert J^\mu \Vert - \Vert \Pi J^\mu \Vert \leq \Vert J^\mu \Vert$. But (\ref{ApproErrorBound}) provides a tighter bound. From (\ref{TDApproErrorUnderTrueCost}), we know that for the case $\lambda=1$, the TD($\lambda$) algorithm under the true costs (\ref{ApproximateTDupdate}) and (\ref{ApproximateEligibility}) gives the best approximation of $J^\mu$, i.e., $\Phi' r^* = \Pi J^\mu$ and $\Vert \Phi' r^* - J^ \mu \Vert_D = \min_r \Vert \Phi' r - J^\mu \Vert_D$. As $\lambda$ decreases, $(1-\alpha\lambda)/ \sqrt{(1-\alpha)(1+\alpha -2\lambda \alpha)}$ increases and the bound deteriorates. The worst bound, namely, $\Vert (\Pi-I)J^\mu \Vert/\sqrt{1-\alpha^2}$ is obtained when $\lambda=0$. Although the result provides a bound, it suggests that in the worst-case scenario, as $\lambda$ decreases, the TD($\lambda$) under costs manipulation can suffer higher approximation error. From Lemma \ref{TDApproErrorUnderTrueCost}, we know that the manipulation of the costs has no impact on the second part of the bound which is $(1-\lambda \alpha)\Vert (\Pi-I) J^\mu\Vert_D/ \sqrt{(1-\alpha)(1+\alpha -2\lambda \alpha)}$. In the following theorem, we analyze how much $\Vert \Phi' (\tilde{r}^* -r^*)\Vert_D$ will change under a bounded manipulation of the costs.

\begin{theorem}\label{TDAttackBound}
Suppose the manipulation of the costs is given by $\eta\in \mathbb{R}^n$ where $\eta(i)= \tilde{g}(i,\mu(i)) - g(i,\mu(i)) $ for $i\in S$. Then, the distance between the manipulated TD($\lambda$) estimate and the true TD($\lambda$) estimate of $J^\mu$ satisfies
$$
\Vert \Phi' \tilde{r}^* - \Phi' r^* \Vert_D  \leq \frac{1}{1-\alpha} \Vert \eta \Vert_D, 
$$
and the approximation error of $\Phi' \tilde{r}^*$ satisfies
\begin{equation}\label{FalTDBound}
\Vert \Phi' \tilde{r}^* - J^ \mu \Vert_D \leq \Vert \frac{1}{1-\alpha} \Vert \eta \Vert_D + \frac{1-\lambda \alpha}{\sqrt{(1-\alpha)(1+\alpha -2\lambda \alpha)}} \Vert (\Pi-I) J^\mu\Vert_D.
\end{equation}
\end{theorem}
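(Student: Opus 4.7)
The plan is to establish the first inequality by comparing the two projected TD($\lambda$) fixed-point equations directly, and then obtain the second inequality by plugging the result into Lemma \ref{BoundFalApproximationError}.

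I would start from the characterization given in Lemma \ref{ConvergenceFalTD} together with the corresponding characterization of $r^*$ stated just before it. Both $\Phi r^*$ and $\Phi\tilde r^*$ arise as the unique fixed points of an affine projected Bellman operator: $\Phi r^* = \Pi(M \Phi r^* + q)$ and $\Phi\tilde r^* = \Pi(M\Phi\tilde r^* + \tilde q)$, where $M = (1-\lambda)\sum_{m=0}^\infty \lambda^m(\alpha P_\mu)^{m+1}$ is the common linear part, $q = \sum_{m=0}^\infty (\alpha\lambda P_\mu)^m g$, $\tilde q = \sum_{m=0}^\infty (\alpha\lambda P_\mu)^m \tilde g$, and $\Pi$ is the $\Vert\cdot\Vert_D$-orthogonal projector onto the range of $\Phi$ introduced in Lemma \ref{BoundFalApproximationError}.

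Subtracting the two fixed-point equations and writing $\Delta := \Phi\tilde r^* - \Phi r^*$ and $q_\eta := \tilde q - q = \sum_{m=0}^\infty (\alpha\lambda P_\mu)^m \eta$, I get $\Delta = \Pi M \Delta + \Pi q_\eta$. The engine of the proof is that $\Pi M$ is a contraction in $\Vert\cdot\Vert_D$ with modulus $\kappa := \alpha(1-\lambda)/(1-\alpha\lambda)$: $\Pi$ is $\Vert\cdot\Vert_D$-nonexpansive, and $P_\mu$ is $\Vert\cdot\Vert_D$-nonexpansive because $\pi$ is its stationary distribution, so the convex combination $M$ of scaled powers $(\alpha P_\mu)^{m+1}$ has $\Vert\cdot\Vert_D$-operator norm at most $\kappa$ (this is precisely the content of the Tsitsiklis--Van Roy contraction lemma in \cite{tsitsiklis1997analysis}). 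Rearranging yields
\[
\Vert \Delta \Vert_D \;\leq\; \frac{1}{1-\kappa}\,\Vert \Pi q_\eta \Vert_D \;\leq\; \frac{1}{1-\kappa}\,\Vert q_\eta \Vert_D \;\leq\; \frac{1}{(1-\kappa)(1-\alpha\lambda)}\,\Vert \eta \Vert_D,
\]
where the last step uses the geometric bound $\Vert q_\eta \Vert_D \leq \sum_{m=0}^\infty (\alpha\lambda)^m \Vert \eta \Vert_D$. A short algebraic simplification gives $(1-\kappa)(1-\alpha\lambda) = 1-\alpha$, which collapses the prefactor to $1/(1-\alpha)$ and establishes the first claim, manifestly independent of $\lambda$. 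Substituting into (\ref{ApproErrorBound}) of Lemma \ref{BoundFalApproximationError} immediately produces (\ref{FalTDBound}).

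The main obstacle I anticipate is verifying the contraction of $\Pi M$ in $\Vert\cdot\Vert_D$ with modulus $\kappa$; while it is a standard ingredient of the Tsitsiklis--Van Roy convergence proof, the stationarity of $\pi$ under $P_\mu$ has to be used carefully, and a clean presentation requires viewing $M$ as a probability-weighted mixture of $(\alpha P_\mu)^{m+1}$. Once this contraction is in hand, everything else is mechanical, and the cancellation $(1-\kappa)(1-\alpha\lambda) = 1-\alpha$ is the pleasant algebraic surprise responsible for the $\lambda$-independence of the adversary-induced error bound.
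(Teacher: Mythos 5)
Your proof is correct, and it follows the same skeleton as the paper's: subtract the two fixed-point characterizations of $r^*$ and $\tilde r^*$, bound the linear part $M$ in $\Vert\cdot\Vert_D$ by $\kappa=\alpha(1-\lambda)/(1-\alpha\lambda)$ using the nonexpansiveness of $P_\mu$ under its stationary distribution, bound $q_\eta=\sum_m(\alpha\lambda P_\mu)^m\eta$ by the geometric series $\Vert\eta\Vert_D/(1-\alpha\lambda)$, and exploit the cancellation $(1-\kappa)(1-\alpha\lambda)=1-\alpha$ before invoking Lemma \ref{BoundFalApproximationError}. Where you genuinely diverge is in how the perturbed equation is handled, and your version is the more careful one. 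The paper passes from $\Phi'D\bigl[(M-I)\Phi(\tilde r^*-r^*)+q_\eta\bigr]=0$ to $(M-I)\Phi(\tilde r^*-r^*)+q_\eta=0$ by appealing to ``full rank'' of $\Phi'D$; but $\Phi'D$ is a $K\times n$ matrix with $K<n$, so it only has full row rank and one may only conclude that the bracketed vector is $D$-orthogonal to the range of $\Phi$ --- i.e., precisely your projected equation $\Delta=\Pi M\Delta+\Pi q_\eta$. Your retention of $\Pi$, together with its $\Vert\cdot\Vert_D$-nonexpansiveness, repairs this step at no cost to the final constant. The paper also writes the norm of the perturbed identity as an equality of norms of the individual summands, which is not valid as stated; your rearrangement $\Vert\Delta\Vert_D\leq\kappa\Vert\Delta\Vert_D+\Vert q_\eta\Vert_D$ is the correct triangle-inequality form of the same estimate. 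In short, same route, but your write-up closes two gaps in the paper's derivation.
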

\begin{proof}
Note that $\tilde{r}^*$ and $r^*$ satisfies $A\tilde{r}^* + \tilde{b}=0$ and $Ar^* + b =0$ where $b=\Phi'D\sum_{m=0}^\infty (\alpha \lambda P_\mu)^m \bar{g}$. Here, $\bar{g}$ is vector whose $i$-th component is $g(i,\mu(i))$. 

Then, we have $A(\tilde{r}^*-r^*) + \tilde{b}-b =0$ which implies (because $\Phi' D$ is of full rank)
$$
M\Phi (\tilde{r}^* -r^*) + \sum_{m=0}^\infty (\alpha \lambda P_\mu)^m (\tilde{g}-\bar{g}) =\Phi (\tilde{r}^* -r^*).
$$
Applying $\Vert \cdot\Vert_D$ on both side of the equation, we obtain
$$
\Vert M\Phi (\tilde{r}^* -r^*) \Vert_D + \Vert \sum_{m=0}^\infty (\alpha \lambda P_\mu)^m \eta \Vert_D = \Vert \Phi (\tilde{r}^* -r^*) \Vert.
$$

From Lemma 6.4 in \cite{bertsekas1996neuro}, we know that for any $J\in\mathbb{R}^n$, we have $\Vert P_\mu J \Vert_D \leq \Vert J \Vert$.  From this, it easily follows that $\Vert P^m_\mu J \Vert_D\leq \Vert J\Vert_D$.
Note that $M=(1-\lambda)\sum_{m=0}^\infty \lambda^m (\alpha P_{\mu})^{m+1}$. Using the triangle inequality, we obtain
$$
\Vert MJ\Vert_D \leq  (1-\lambda) \sum_{m=0}^\infty \lambda^m \alpha^{m+1} \Vert J\Vert_D = \frac{\alpha(1-\lambda)}{1-\alpha \lambda} \Vert J\Vert_D.
$$
Hence, we have
$$
\Vert \sum_{m=0}^\infty (\alpha \lambda P_\mu)^m \eta \Vert_D \geq  \frac{1-\alpha}{1-\alpha \lambda}\Vert \Phi (\tilde{r}^* -r^*) \Vert .
$$
Moreover, we can see that
$$
\Vert \sum_{m=0}^\infty (\alpha \lambda P_\mu)^m \eta \Vert_D \leq \Vert \sum_{m=0}^\infty (\alpha \lambda P_\mu)^m \Vert_D \Vert \eta\Vert_D \leq \sum_{m=0}^\infty (\alpha \lambda)^m \Vert \eta\Vert_D = \frac{1}{1-\alpha\lambda} \Vert \eta\Vert_D,
$$
which indicates that
$$
\frac{1}{1-\alpha \lambda} \Vert \eta\Vert_D \geq  \frac{1-\alpha}{1-\alpha \lambda}\Vert \Phi (\tilde{r}^* -r^*) \Vert_D.
$$
Thus, we have
$$
\Vert \Phi (\tilde{r}^* -r^*) \Vert_D \leq \frac{1}{1-\alpha} \Vert \eta\Vert_D.
$$
Together with Lemma \ref{BoundFalApproximationError}, we have
$$
\Vert \Phi' \tilde{r}^* - J^ \mu \Vert_D \leq  \frac{1}{1-\alpha} \Vert \eta \Vert_D + \frac{1-\lambda \alpha}{\sqrt{(1-\alpha)(1+\alpha -2\lambda \alpha)}} \Vert (\Pi-I) J^\mu\Vert_D.
$$
\qed
\end{proof}

The distance between the true TD($\lambda$) approximator $\Phi'r^*$ and the manipulated TD($\lambda$) approximator $\Phi' \tilde{r}^*$ is actually bounded by a constant term times the distance of the true cost and the manipulated cost. And the constant term $1/(1-\alpha)$ does not depend on $\lambda$. This means the choice of $\lambda$ by the agent does not affect the robustness of the TD($\lambda$) algorithm. This means in the worst-case scenario, TD($\lambda$) algorithms with different values of $\lambda$ will suffer the same loss of approximation accuracy. From (\ref{FalTDBound}), we can conclude that the approximation error of the manipulated TD($\lambda$) algorithm is bounded by the magnitude of the costs manipulation and a fixed value decided by the value of $\lambda$, the choice of basis $\Phi$ and the properties of the MDP.

\subsection{Q-Learning with Manipulated Cost Signals}

If the RL agent learns an optimal policy by $Q$-learning algorithm given in (\ref{QlearningUpdate}), then under stealthy attacks on cost, the algorithm can be written as 
\begin{equation}\label{FalsifiedQLearning}
\begin{aligned}
\tilde{Q}_{t+1}(i,u) = (1-\gamma_t)\tilde{Q}_t(i,u) + \gamma_t  \left( \tilde{g}(i,u)  + \alpha \min_{v\in U(\bar{i})} \tilde{Q}_t(\bar{i},v)  \right).
\end{aligned}
\end{equation}
Note that if the attacks are not stealthy, we need to write $\tilde{g}_t$ in lieu of $\tilde{g}(i_t,a_t)$. There are two important questions regarding the $Q$-learning algorithm with falsified cost (\ref{FalsifiedQLearning}): (1) Will the sequence of $Q_t$-factors converge? (2) Where will the sequence of $Q_t$ converge to? 

Suppose that the sequence $\tilde{Q}_t$ generated by the $Q$-learning algorithm (\ref{FalsifiedQLearning}) converges. Let $\tilde{Q}^*$ be the limit, i.e., $\tilde{Q}^* = \lim_{n\rightarrow\infty} \tilde{Q}_t$. Suppose the objective of an adversary is to induce the RL agent to learn a particular policy $\mu^\dagger$. The adversary's problem then is to design $\tilde{g}$ by applying the actions available to him based on the information he has so that the limit $Q$-factors learned from the $Q$-learning algorithm produce the policy favored by the adversary $\mu^\dagger$, i.e, $\tilde{Q}^*\in \mathcal{V}_{\mu^\dagger}$, where 
$$
\mathcal{V}_{\mu}\coloneqq \{Q\in\mathbb{R}^{n \times |A|}: \mu(i)=\arg\min_u Q(i,u), \forall i\in S \}.
$$

In $Q$-learning algorithm (\ref{FalsifiedQLearning}), to guarantee almost sure convergence, the agent usually takes tapering stepsize \cite{borkar2009stochastic} $\{\gamma_t\}$ which satisfies $0<\gamma_t\leq 1$, $t\geq 0$, and $\sum_t \gamma_t =\infty$, $\sum_t \gamma_t^2 < \infty$. Suppose in our problem, the agent takes tapering stepsize. To address the convergence issues, we have the following result. 

\begin{lemma}\label{ConvergenceFalQ}
If an adversary performs stealthy attacks with bounded $\tilde{g}(i,a,j)$ for all $i,j\in S,a\in A$, then the $Q$-learning algorithm with falsified costs converges to the fixed point of $\tilde{F}(Q)$ almost surely where the mapping $\tilde{F}:\mathbb{R}^{n \times |A|} \rightarrow \mathbb{R}^{n\times |A|}$ is defined as $\tilde{F}(Q)=[\tilde{F}_{ii}(Q)]_{i,i}$ with
$$
\tilde{F}_{iu}(Q)=\alpha \sum_j p_{ij}(u) \min_v Q(j,v) +\tilde{g}(i,u,j),
$$
and the fixed point is unique and denoted by $\tilde{Q}^*$.
\end{lemma}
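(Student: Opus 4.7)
The plan is to adapt the standard $Q$-learning convergence proof (e.g., Proposition 5.5 in \cite{bertsekas1996neuro} and the ODE-based analysis in \cite{borkar2000ode}) by observing that stealthiness reduces the attack to a deterministic modification of the cost function. Since $\tilde{g}(i,u,j)$ depends only on the triple $(i,u,j)$, the iteration (\ref{FalsifiedQLearning}) has exactly the same functional form as the standard $Q$-learning update (\ref{QlearningUpdate}) but with $g$ replaced by the bounded function $\tilde{g}$. Consequently the analysis reduces to checking that the usual hypotheses still hold under the new cost.

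First I would verify that $\tilde{F}$ is an $\alpha$-contraction in the sup-norm. Using the elementary inequality $|\min_v Q_1(j,v) - \min_v Q_2(j,v)| \leq \max_{j,v}|Q_1(j,v)-Q_2(j,v)|$ together with $\sum_j p_{ij}(u)=1$, I obtain
$$
\Vert \tilde{F}(Q_1) - \tilde{F}(Q_2) \Vert_\infty \leq \alpha \Vert Q_1 - Q_2 \Vert_\infty .
$$
For $\alpha<1$ the Banach fixed-point theorem (already invoked by the authors for the true cost in (\ref{QBellmanEquation})) then yields existence and uniqueness of the fixed point $\tilde{Q}^*$.

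Next I would recast (\ref{FalsifiedQLearning}) in stochastic-approximation form
$$
\tilde{Q}_{t+1}(i,u) = (1-\gamma_t)\tilde{Q}_t(i,u) + \gamma_t \bigl( \tilde{F}_{iu}(\tilde{Q}_t) + w_t(i,u) \bigr),
$$
where $w_t(i,u) = \tilde{g}(i,u,\bar{i}) + \alpha \min_v \tilde{Q}_t(\bar{i},v) - \tilde{F}_{iu}(\tilde{Q}_t)$ is a martingale difference with respect to the natural filtration $\mathcal{F}_t$ generated by the simulation history. Boundedness of $\tilde{g}$ (the stealthy-attack hypothesis), combined with a standard conditional-variance bound on $\min_v \tilde{Q}_t(\bar{i},v)$, gives a second-moment estimate of the form $E[w_t(i,u)^2 \mid \mathcal{F}_t] \leq K(1+\Vert \tilde{Q}_t \Vert_\infty^2)$. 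Together with the tapering stepsize conditions $\sum_t \gamma_t=\infty$ and $\sum_t \gamma_t^2<\infty$, this matches the hypotheses of the stochastic approximation theorem for sup-norm contractions (Proposition 4.4 in \cite{bertsekas1996neuro}; equivalently Theorem 2.1 in \cite{borkar2000ode}), delivering $\tilde{Q}_t \to \tilde{Q}^*$ almost surely.

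The main obstacle I anticipate is establishing almost-sure boundedness of the iterates $\{\tilde{Q}_t\}$, which is a prerequisite for invoking the stochastic approximation theorem. The resolution is classical: because $\tilde{F}$ is a sup-norm contraction centered at $\tilde{Q}^*$ and $\tilde{g}$ is uniformly bounded, one can either invoke a Borkar--Meyn style stability criterion or run a direct induction that propagates a Lyapunov inequality for $\Vert \tilde{Q}_t - \tilde{Q}^* \Vert_\infty$ through (\ref{FalsifiedQLearning}). Once stability is secured, the remaining arguments are identical to the unmanipulated case, which explains why the proof ultimately reduces to the standard one with $g$ replaced by $\tilde{g}$.
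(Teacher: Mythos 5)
Your proposal is correct and follows essentially the same route the paper takes: the paper (deferring details to the cited reference) likewise argues that stealthiness makes the falsified cost a consistent function of the state--action pair, so the standard $Q$-learning convergence analysis applies verbatim with $g$ replaced by the bounded $\tilde{g}$, with $\tilde{F}$ an $\alpha$-contraction and Banach's fixed-point theorem giving uniqueness of $\tilde{Q}^*$. Your write-up simply makes explicit the stochastic-approximation and iterate-boundedness steps that the paper leaves to the reference.
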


The proof of Lemma 1.1 is included in \cite{huang2019deceptive}. It is not surprising that one of the conditions given in Lemma \ref{ConvergenceFalQ} is that an attacker performs stealthy attacks. The convergence can be guaranteed because the falsified cost signals are consistent over time for each state action pair. The uniqueness of $\tilde{Q}^*$ comes from the fact that if $\tilde{g}(i,u)$ is bounded for every $(i,u)\in S\times A$, $\tilde{F}$ is a contraction mapping. By Banach's fixed point theorem \cite{kreyszig1978introductory}, $\tilde{F}$ admits a unique fixed point. With this lemma, we conclude that an adversary can make the algorithm converge to a limit point by stealthily falsifying the cost signals. 

\begin{remark}
Whether an adversary aims for the convergence of the $Q$-learning algorithm (\ref{FalsifiedQLearning}) or not depends on his objective. In our setting, the adversary intends to mislead the RL agent to learn  policy $\mu^\dag$, indicating that the adversary promotes convergence and aim to have the limit point $\tilde{Q}^*$ lie in $\mathcal{V}_{\mu^\dag}$.
\end{remark}

It is interesting to analyze, from the adversary's perspective, how to falsify the cost signals so that the limit point that algorithm (\ref{FalsifiedQLearning}) converges to is favored by the adversary. In later discussions, we consider stealthy attacks where the falsified costs are consistent for the same state action pairs. Denote the true cost by matrix $g\in\mathbb{R}^{n\times |A|}$ with $[g]_{i,u}=g(i,u)$ and the falsified cost is described by a matrix $\tilde{g}\in\mathbb{R}^{n\times |A|}$ with $[\tilde{g}]_{i,u}=\tilde{g}(i,u)$. Given $\tilde{g}$, the fixed point of $\tilde{F}$ is uniquely decided; i.e., the point that the algorithm (\ref{FalsifiedQLearning}) converges to is uniquely determined. Thus, there is a mapping $\tilde{g}\mapsto\tilde{Q}^*$ implicitly described by the relation $\tilde{F}(Q)=Q$. For convenience, this mapping is denoted by $f:\mathbb{R}^{n\times |A|}\rightarrow \mathbb{R}^{n\times |A|}$.

\begin{theorem}[No Butterfly Effect]\label{LipContTheo}
Let $\tilde{Q}^*$ denote the $Q$-factor learned from algorithm (\ref{QlearningUpdate}) with falsified cost signals and $Q^*$ be the $Q$-factor learned from (\ref{FalsifiedQLearning}) with true cost signals. There exists a constant $L<1$ such that
\begin{equation}\label{LipCont}
    \Vert \tilde{Q}^* - Q^* \Vert  \leq  \frac{1}{1-L}\Vert \tilde{g}- g \Vert,
\end{equation}
and $L=\alpha$.
\end{theorem}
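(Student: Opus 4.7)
The plan is to exploit the fact that both $Q^*$ and $\tilde Q^*$ are fixed points of contraction mappings that differ only by an additive cost term. Specifically, $Q^* = F(Q^*)$ where $F_{iu}(Q) = \alpha \sum_j p_{ij}(u)\min_v Q(j,v) + g(i,u)$, and $\tilde Q^* = \tilde F(\tilde Q^*)$ with $\tilde F$ defined in Lemma \ref{ConvergenceFalQ}. Crucially, for every $Q$ the difference $\tilde F(Q) - F(Q)$ equals the constant matrix $\tilde g - g$ (no $Q$-dependence), because the expectation/minimization structure in both operators is identical.

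I would first record that $\tilde F$ (and $F$) is an $\alpha$-contraction in the sup-norm $\Vert\cdot\Vert_\infty$; this is exactly the property invoked in Lemma \ref{ConvergenceFalQ} to obtain uniqueness of the fixed point via Banach's theorem, and it is the standard Bellman-operator contraction argument: for any $Q, Q'$,
\begin{equation*}
\bigl|\tilde F_{iu}(Q) - \tilde F_{iu}(Q')\bigr|
\le \alpha \sum_j p_{ij}(u) \bigl|\min_v Q(j,v) - \min_v Q'(j,v)\bigr|
\le \alpha \Vert Q - Q'\Vert_\infty.
\end{equation*}

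Then the core estimate is a one-line triangle inequality combined with the fixed-point equations:
\begin{equation*}
\Vert \tilde Q^* - Q^* \Vert
= \Vert \tilde F(\tilde Q^*) - F(Q^*)\Vert
\le \Vert \tilde F(\tilde Q^*) - \tilde F(Q^*)\Vert + \Vert \tilde F(Q^*) - F(Q^*)\Vert
\le \alpha \Vert \tilde Q^* - Q^*\Vert + \Vert \tilde g - g\Vert.
\end{equation*}
Rearranging gives $(1-\alpha)\Vert \tilde Q^* - Q^*\Vert \le \Vert \tilde g - g\Vert$, which is exactly \eqref{LipCont} with $L = \alpha$. Since $\alpha \in (0,1]$ in the discounted setting (and $\alpha<1$ is implicit for the contraction to be strict), the constant $\frac{1}{1-L}$ is well-defined.

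The only point that requires care is the choice of norm: the statement uses an unspecified $\Vert\cdot\Vert$, and the contraction is most naturally stated in $\Vert\cdot\Vert_\infty$, so I would either declare the norm to be sup-norm up front or note that equivalence of norms on $\mathbb{R}^{n\times|A|}$ makes the result qualitatively identical up to constants. The contraction step itself is the only nontrivial ingredient and is essentially inherited from Lemma \ref{ConvergenceFalQ}; the rest is a textbook perturbation-of-fixed-point argument, so no serious obstacle is expected.
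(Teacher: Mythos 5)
Your argument is correct and is essentially the same as the one the paper relies on: the chapter defers the proof to \cite{huang2019deceptive}, where the bound is obtained by exactly this fixed-point perturbation argument --- writing $\Vert \tilde{Q}^* - Q^*\Vert = \Vert \tilde{F}(\tilde{Q}^*) - F(Q^*)\Vert$, splitting by the triangle inequality into the $\alpha$-contraction term plus the constant offset $\Vert\tilde{g}-g\Vert$, and rearranging. Your remark about fixing the sup-norm is a sensible clarification but does not change the substance.
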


The proof of Theorem \ref{LipContTheo} can be found in \cite{huang2019deceptive}. In fact, taking this argument just slightly further, one can conclude that falsification on cost $g$ by a tiny perturbation does not cause significant changes in the limit point of algorithm (\ref{FalsifiedQLearning}), $\tilde{Q}^*$. This feature indicates that an adversary cannot cause a significant change in the limit $Q$-factors by just applying a small perturbation in the cost signals. 
This is a feature known as stability, which is observed in problems that possess contraction mapping properties. Also, Theorem \ref{LipContTheo} indicates that the mapping $\tilde{g}\mapsto \tilde{Q}^*$ is continuous, and to be more specific, it is uniformly Lipschitz continuous with Lipschitz constant ${1}/{(1-\alpha)}$.

With Theorem \ref{LipContTheo}, we can characterize the minimum level of falsification required to change the policy from the true optimal policy $\mu^*$ to the policy $\mu^\dag$. First, note that $\mathcal{V}_\mu\subset \mathbb{R}^{n\times |A|}$ and it can also be written as 
\begin{equation}\label{VictorySet}
\mathcal{V}_\mu=\{Q\in\mathbb{R}^{n\times |A|}: Q(i,\mu(i))< Q(i,u), \forall i\in S, \forall u\neq \mu(i) \}.
\end{equation}
Second, for any two different policies $\mu_1$ and $\mu_2$, $\mathcal{V}_{\mu_1}\cap \mathcal{V}_{\mu_2}=\varnothing$. Lemma \ref{LemmaForSet} presents several important properties regarding the set $\mathcal{V}_\mu$.

\begin{lemma}\label{LemmaForSet}
\begin{enumerate}[(a)]
    \item For any given policy $\mu$, $\mathcal{V}_\mu$ is a convex set.
    \item For any two different policies $\mu_1$ and $\mu_2$, $\mathcal{V}_{\mu_1}\cap \mathcal{V}_{\mu_2}=\varnothing$.
     \item The distance between any two different policies $\mu_1$ and $\mu_2$ defined as $D(\mu_1,\mu_2)\coloneqq\inf_{Q_1\in \mathcal{V}_{\mu_1}, Q_2\in\mathcal{V}_{\mu_2}} \Vert Q_1 - Q_2 \Vert$ is zero.
\end{enumerate}
\end{lemma}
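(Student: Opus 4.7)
The plan is to handle the three parts in sequence, since each is essentially a direct structural check once the correct definition of $\mathcal{V}_\mu$ from equation \eqref{VictorySet} is unpacked.

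For part (a), I would take $Q_1, Q_2 \in \mathcal{V}_\mu$ and $t \in [0,1]$ and verify that $Q_t := tQ_1 + (1-t)Q_2$ satisfies the defining strict inequalities $Q_t(i,\mu(i)) < Q_t(i,u)$ for every $i$ and every $u \neq \mu(i)$. Because each of $Q_1(i,\mu(i)) < Q_1(i,u)$ and $Q_2(i,\mu(i)) < Q_2(i,u)$ holds by hypothesis, the convex combination preserves strict inequality when $t \in (0,1)$, and the endpoints $t=0,1$ are trivial.

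For part (b), I would argue by contradiction: if $Q \in \mathcal{V}_{\mu_1} \cap \mathcal{V}_{\mu_2}$ with $\mu_1 \neq \mu_2$, pick any state $i$ at which $\mu_1(i) \neq \mu_2(i)$. The definition applied to $\mu_1$ forces $Q(i,\mu_1(i)) < Q(i,\mu_2(i))$, while the definition applied to $\mu_2$ forces the opposite strict inequality $Q(i,\mu_2(i)) < Q(i,\mu_1(i))$, which is impossible.

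For part (c), the strategy is to construct explicit witnesses near the common boundary. Fix $\varepsilon > 0$. I would define a common baseline $Q^0$ with $Q^0(i,u) = 0$ for all $i,u$, and then produce two tiny perturbations: set $Q_1(i,u) = \varepsilon$ for $u \neq \mu_1(i)$ and $Q_1(i,\mu_1(i)) = 0$, and symmetrically set $Q_2(i,u) = \varepsilon$ for $u \neq \mu_2(i)$ and $Q_2(i,\mu_2(i)) = 0$. By construction $Q_1 \in \mathcal{V}_{\mu_1}$ and $Q_2 \in \mathcal{V}_{\mu_2}$, and the two matrices differ only at those entries $(i,u)$ with $u \in \{\mu_1(i),\mu_2(i)\}$ and $\mu_1(i) \neq \mu_2(i)$, each difference having magnitude $\varepsilon$. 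Hence $\|Q_1 - Q_2\| \le c(n,|A|)\,\varepsilon$ for some finite constant depending only on the chosen norm, so letting $\varepsilon \to 0$ gives $D(\mu_1,\mu_2) = 0$.

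The only real subtlety is in part (c), where one must convince oneself that even though parts (a) and (b) make each $\mathcal{V}_\mu$ a disjoint open convex region, the regions are adjacent in the sense that they share boundary: the construction above is essentially a point on the common boundary (the tie surface $Q(i,\mu_1(i)) = Q(i,\mu_2(i))$) approached from both sides. I expect this is the only step where a reader might want a moment's thought; parts (a) and (b) are routine manipulations of the strict inequalities.
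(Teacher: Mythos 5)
Your proposal is correct and follows essentially the same route as the paper's proof: part (a) by preserving the strict inequalities under convex combination, part (b) by deriving contradictory strict inequalities at a state where the two policies differ, and part (c) by constructing explicit witnesses that are an $\varepsilon$-perturbation of a constant matrix and letting $\varepsilon \to 0$ (the paper uses an all-ones baseline with the policy entries lowered by $\epsilon/2$ and then invokes equivalence of norms, which is the same idea as your $c(n,|A|)\varepsilon$ bound). No gaps.
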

\begin{proof}
(a) Suppose $Q_1,Q_2 \in \mathcal{V}_{\mu}$. We show for every $\lambda \in [0,1]$, $\lambda Q_1 + (1-\lambda) Q_2\in \mathcal{V}_\mu$. This is true because 
$ Q_1(i,\mu(i))< Q_1(i,u)$ and $ Q_2(i,\mu(i))< Q_2(i,u)$ imply 
$$
\lambda Q_1(i,\mu(i)) + (1-\lambda) Q_2(i,\mu(i)) <   \lambda Q_1(i,u) + (1-\lambda) Q_2(i,u),
$$
for all $i\in S, u\neq \mu(i)$.

(b) Suppose $\mu_1 \neq \mu_2$ and $\mathcal{V}_{\mu_1} \cap \mathcal{V}_{\mu_2}$ is not empty. Then there exists $Q\in \mathcal{V}_{\mu_1} \cap \mathcal{V}_{\mu_2}$. Since $\mu_1 = \mu_2$, there exists $i$ such that $\mu_1(i)\neq \mu_2(i)$. Let $u=\mu_2$. Since $Q\in \mathcal{V}_{\mu_1}$, we have $Q(i,\mu_1(i))<Q(i,\mu_2(i))$. Hence, $Q\notin \mathcal{V}_{\mu_2}$, which is a contradiction. Thus, $\mathcal{V}_{\mu_1}\cap \mathcal{V}_{\mu_2}=\varnothing$. 

(c) Suppose $\mu_1 \neq \mu_2$. Construct $Q_1$ as a matrix whose entries are all one except $Q(i,\mu_1(i))= 1- \epsilon/2$ for every $i\in S$ where $\epsilon>0$. Similarly, construct $Q_2$ as a matrix whose entries are all one except $Q(i,\mu_2(i))= 1- \epsilon/2$ for every $i\in S$. It is easy to see that $Q_1\in \mathcal{V}_{\mu_1}$ and $Q_2\in\mathcal{V}_{\mu_2}$. Then $\inf_{Q_1\in \mathcal{V}_{\mu_1}, Q_2\in\mathcal{V}_{\mu_2}} \Vert Q_1 - Q_2 \Vert_\infty \leq\Vert Q_1 - Q_2 \Vert_\infty =\epsilon$. Since $\epsilon$ can be arbitrarily small, $\inf_{Q_1\in \mathcal{V}_{\mu_1}, Q_2\in\mathcal{V}_{\mu_2}} \Vert Q_1 - Q_2 \Vert_\infty=0$. Since norms are equivalent in finite-dimensional space (see Sec. 2.4 in \cite{kreyszig1978introductory}), we have $D(\mu_1,\mu_2)=\inf_{Q_1\in \mathcal{V}_{\mu_1}, Q_2\in\mathcal{V}_{\mu_2}} \Vert Q_1 - Q_2 \Vert=0$.\qed
\end{proof}

Suppose the true optimal policy $\mu^*$ and the adversary desired policy $\mu^\dag$ are different; otherwise, the optimal policy $\mu^*$ is what the adversary desires, there is no incentive for the adversary to attack. According to Lemma \ref{LemmaForSet}, $D(\mu^*,\mu^\dag)$ is always zero. This counterintuitive result states that a small change in the $Q$-value may result in any possible change of policy learned by the  agent from the $Q$-learning algorithm (\ref{FalsifiedQLearning}). Compared with Theorem \ref{LipContTheo} which is a negative result to the adversary, this result is in favor of the adversary.

Define the point $Q^*$ to set 
$\mathcal{V}_{\mu^\dag}$ distance by 
$
D_{Q^*}(\mu^\dag)\coloneqq \inf_{Q\in\mathcal{V}_{\mu^\dag}} \Vert Q-Q^* \Vert.
$
Thus, if $\tilde{Q}^*\in\mathcal{V}_{\mu^\dag}$, we have 
\begin{equation}\label{RobustRegionIne}
0= D(\mu^*,\mu^\dagger)\leq D_{Q^*}(\mu^\dag)\leq \Vert \tilde{Q}^*-Q^* \Vert  \leq\frac{1}{1-\alpha}\Vert \tilde{g} - g \Vert,
\end{equation}
where the first inequality comes from the fact that $Q^*\in \mathcal{V}_{\mu^*}$ and the second inequality is due to $\tilde{Q}^*\in V_{\mu^\dag}$. The inequalities give us the following theorem. 

\begin{theorem}[Robust Region]
To make the agent learn the policy $\mu^\dag$, the adversary has to manipulate the cost such that $\tilde{g}$ lies outside the ball $\mathcal{B}(g;(1-\alpha)D_{Q^*}(\mu^\dag))$.
\end{theorem}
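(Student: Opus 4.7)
The plan is to prove the theorem by contrapositive: I will show that if the adversary's falsification $\tilde{g}$ lies strictly inside the ball $\mathcal{B}(g;(1-\alpha)D_{Q^*}(\mu^\dag))$, then the limit $\tilde{Q}^*$ of the falsified $Q$-learning update cannot belong to $\mathcal{V}_{\mu^\dag}$, so the agent cannot end up learning $\mu^\dag$. This is essentially a packaging of the chain of inequalities already displayed in equation (\ref{RobustRegionIne}); the theorem statement is the contrapositive read-off.

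The key steps, in order, are as follows. First, suppose for contradiction that $\tilde{Q}^* \in \mathcal{V}_{\mu^\dag}$, i.e., the attack succeeds. Then by the very definition $D_{Q^*}(\mu^\dag) = \inf_{Q \in \mathcal{V}_{\mu^\dag}} \Vert Q - Q^* \Vert$, the distance from $Q^*$ to the set $\mathcal{V}_{\mu^\dag}$ is bounded above by $\Vert \tilde{Q}^* - Q^* \Vert$. Second, apply Theorem \ref{LipContTheo} (with $L = \alpha$) to the pair $(Q^*, \tilde{Q}^*)$, which gives the Lipschitz-type bound
\begin{equation*}
\Vert \tilde{Q}^* - Q^* \Vert \leq \frac{1}{1-\alpha}\Vert \tilde{g} - g \Vert.
\end{equation*}
Chaining these two inequalities yields $D_{Q^*}(\mu^\dag) \leq \frac{1}{1-\alpha}\Vert \tilde{g} - g\Vert$, equivalently $\Vert \tilde{g}-g\Vert \geq (1-\alpha) D_{Q^*}(\mu^\dag)$. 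Thus any successful attack must place $\tilde{g}$ outside the open ball $\mathcal{B}(g;(1-\alpha)D_{Q^*}(\mu^\dag))$, which is the statement of the theorem.

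There is essentially no computational obstacle here; the heavy lifting was done in Lemma \ref{ConvergenceFalQ} (to guarantee that $\tilde{Q}^*$ exists and is unique) and in Theorem \ref{LipContTheo} (to supply the Lipschitz constant $1/(1-\alpha)$). The only conceptual care needed is to state clearly what ``the adversary cannot make the agent learn $\mu^\dag$'' means operationally, namely $\tilde{Q}^* \notin \mathcal{V}_{\mu^\dag}$, which is justified by the definition of $\mathcal{V}_{\mu^\dag}$ in (\ref{VictorySet}) together with the fact that the greedy policy extracted from $\tilde{Q}^*$ is the one the agent ultimately adopts. I would also remark that the ball is sharp only up to the slack in the Lipschitz bound of Theorem \ref{LipContTheo}; Lemma \ref{LemmaForSet}(c) warns that outside this ball the adversary may well succeed, but within it success is impossible, justifying the name ``robust region.''
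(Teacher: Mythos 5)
Your proposal is correct and follows essentially the same route as the paper: the paper's justification is precisely the chain of inequalities in (\ref{RobustRegionIne}), namely $D_{Q^*}(\mu^\dag)\leq \Vert \tilde{Q}^*-Q^*\Vert \leq \frac{1}{1-\alpha}\Vert\tilde{g}-g\Vert$ whenever $\tilde{Q}^*\in\mathcal{V}_{\mu^\dag}$, read off in contrapositive form exactly as you do. Your additional care in making the operational meaning of ``learning $\mu^\dag$'' explicit is a minor presentational improvement, not a different argument.
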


The \textit{robust region} for the true cost $g$ to the adversary's targeted policy $\mu^\dag$ is given by $\mathcal{B}(g;(1-\alpha)D_{Q^*}(\mu^\dag) )$ which is an open ball with center $c$ and radius $(1-\alpha)D_{Q^*}(\mu^\dag)$. That means the attacks on the cost needs to be `powerful' enough to drive the falsified cost $\tilde{g}$ outside the ball $\mathcal{B}(g;(1-\alpha)D_{Q^*}(\mu^\dag) )$ to make the RL agent learn the policy $\mu^\dag$. If the falsified cost $\tilde{g}$ is within the ball, the RL agent can never learn the adversary's targeted policy $\mu^\dag$. The ball $\mathcal{B}(g;(1-\alpha)D_{Q^*}(\mu^\dag))$ depends only on the true cost $g$ and the adversary desired policy $\mu^\dag$ (Once the MDP is given, $Q^*$ is uniquely determined by $g$). Thus, we refer this ball as the robust region of the true cost $g$ to the adversarial policy $\mu^\dag$. As we have mentioned, if the actions available to the adversary only allows him to perform bounded falsification on cost signals and the bound is smaller than the radius of the robust region, then the adversary can never mislead the agent to learn policy $\mu^\dag$.

\begin{remark}
First, in discussions above, the adversary policy $\mu^\dag$ can be any possible polices and the discussion remains valid for any possible policies. Second, set $\mathcal{V}_{\mu}$ of $Q$-values is not just a convex set but also an open set. We thus can see that $D_{Q^*}(\mu^\dag)>0$ for any $\mu^\dag \neq \mu^*$ and the second inequality in (\ref{RobustRegionIne}) can be replaced by a strict inequality. Third, the agent can estimate his own robustness to falsification if he can know the adversary desired policy $\mu^\dag$. For attackers who have access to true cost signals and the system model, the attacker can compute the robust region of the true cost to his desired policy $\mu^\dag$ to evaluate whether the objective is feasible or not. When it is not feasible, the attacker can consider changing his objectives, e.g., selecting other favored policies that have a smaller robust region.
\end{remark}

We have discussed how falsification affects the change of $Q$-factors learned by the agent in a distance sense. The problem now is to study how to falsify the true cost in a right direction so that the resulted $Q$-factors fall into the favored region of an adversary. One difficulty of analyzing this problem comes from the fact that the mapping $\tilde{g}\mapsto \tilde{Q}^*$ is not explicit known. The relation between $\tilde{g}$ and $\tilde{g}^*$ is governed by the $Q$-learning algorithm (\ref{FalsifiedQLearning}). Another difficulty is that due to the fact that both $\tilde{g}$ and $\tilde{Q}^*$ lies in the space of $\mathbb{R}^{n\times |A|}$, we need to resort to Fr\'echet derivative or G\^ateaux derivative \cite{cheney2013analysis} (if they exist) to characterize how a small change of $\tilde{g}$ results in a change in $\tilde{Q}^*$.

From Lemma \ref{ConvergenceFalQ} and Theorem \ref{LipContTheo}, we know that $Q$-learning algorithm converges to the unique fixed point of $\tilde{F}$ and that $f:\tilde{g}\mapsto \tilde{Q}^*$ is uniformly Lipschitz continuous. Also, it is easy to see that the inverse of $f$, denoted by $f^{-1}$, exists since given $\tilde{Q}^*$, $\tilde{g}$ is uniquely decided by the relation $\tilde{F}(Q)=Q$. Furthermore, by the relation $\tilde{F}(Q)=Q$, we know $f$ is both injective and surjective and hence a bijection which can be simply shown by arguing that given different $\tilde{g}$, the solution of $\tilde{F}(Q)=Q$ must be different. This fact informs that there is a one-to-one, onto correspondence between $\tilde{g}$ and $\tilde{Q}^*$. One should note that the mapping $f:\mathbb{R}^{n\times |A|}\rightarrow \mathbb{R}^{n\times |A|}$ is not uniformly Fr\'echet differentiable on $\mathbb{R}^{n\times |A|}$ due to the $\min$ operator inside the relation $\tilde{F}(Q)=Q$. However, for any policy $\mu$, $f$ is Fr\'echet differentiable on $f^{-1}(\mathcal{V}_\mu)$ which is an open set and connected due to the fact that $\mathcal{V}_\mu$ is open and connected (every convex set is connected) and $f$ is continuous. In the next lemma, we show that $f$ is Fr\'echet differentiable on $f^{-1}(\mathcal{V}_w)$ and the derivative is constant.

\begin{lemma}\label{Diffentialf}
The map $f:\mathbb{R}^{n\times |A|}\rightarrow \mathbb{R}^{n\times |A|}$ is Fr\'echet differentiable on $f^{-1}(\mathcal{V}_\mu)$ for any policy $\mu$ and the Fr\'echet derivative of $f$ at any point $\tilde{g}\in \mathcal{V}_\mu$, denoted by $f'(\tilde{g})$, is a linear bounded map $G:\mathbb{R}^{n\times |A|} \rightarrow \mathbb{R}^{n \times |A|}$ that does not depend on $\tilde{g}$, and 
$Gh$ is given as
\begin{equation}\label{FrechetDerivative}
[Gh]_{i,u}= \alpha P_{iu}^T(I-\alpha P_\mu)^{-1}h_\mu + h(i,u) 
\end{equation}
for every $i\in S,u\in A$, where $P_{iu} = (p_{i1}(u),\cdots,p_{in}(u))$, $P_\mu \coloneqq [P_\mu]_{i,j}=p_{ij}(\mu(i))$.
\end{lemma}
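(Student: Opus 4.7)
The plan is to exploit the fact that the $\min$ operator in the Bellman-type equation $\tilde{F}(Q)=Q$ becomes \emph{linear} once we restrict to a region where the minimizing action is locked in. Concretely, for $Q\in\mathcal{V}_\mu$ we have $\min_{v\in U(j)} Q(j,v)=Q(j,\mu(j))$ for every $j\in S$, so the fixed-point equation reduces to the affine system
\begin{equation*}
Q(i,u)=\alpha\sum_{j}p_{ij}(u)\,Q(j,\mu(j))+\tilde g(i,u),\qquad i\in S,\ u\in A.
\end{equation*}
I would first solve this system explicitly. Writing $Q_\mu\in\mathbb{R}^n$ with $Q_\mu(j)=Q(j,\mu(j))$ and $\tilde g_\mu(j)=\tilde g(j,\mu(j))$, evaluating the equation at $u=\mu(i)$ yields $Q_\mu=\alpha P_\mu Q_\mu+\tilde g_\mu$. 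Since $\alpha<1$ and $P_\mu$ is a stochastic matrix, $I-\alpha P_\mu$ is invertible by the Neumann series, so $Q_\mu=(I-\alpha P_\mu)^{-1}\tilde g_\mu$. Substituting back gives, for every $(i,u)$,
\begin{equation*}
f(\tilde g)(i,u)=\alpha P_{iu}^{T}(I-\alpha P_\mu)^{-1}\tilde g_\mu+\tilde g(i,u)\qquad\text{for all }\tilde g\in f^{-1}(\mathcal{V}_\mu).
\end{equation*}

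Next I would argue that $f^{-1}(\mathcal{V}_\mu)$ is open. The set $\mathcal{V}_\mu$ is open in $\mathbb{R}^{n\times|A|}$ by its defining strict inequalities, and $f$ is continuous (in fact Lipschitz, by Theorem \ref{LipContTheo}), hence the preimage is open. Therefore, given any $\tilde g\in f^{-1}(\mathcal{V}_\mu)$, there is a radius $\delta>0$ such that $\tilde g+h\in f^{-1}(\mathcal{V}_\mu)$ whenever $\|h\|<\delta$. On this neighbourhood the closed-form expression above applies both to $\tilde g$ and to $\tilde g+h$, so by direct subtraction
\begin{equation*}
f(\tilde g+h)-f(\tilde g)=Gh,\qquad [Gh]_{i,u}=\alpha P_{iu}^{T}(I-\alpha P_\mu)^{-1}h_\mu+h(i,u),
\end{equation*}
with \emph{no} remainder. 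This is the Fréchet derivative definition (trivially satisfied because the error term is identically zero), and the derivative $G$ is a bounded linear operator that is manifestly independent of the base point $\tilde g$, establishing formula (\ref{FrechetDerivative}).

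The main subtlety — and really the only non-bookkeeping step — is justifying that one may replace $\min_v Q(j,v)$ by $Q(j,\mu(j))$ on a whole neighbourhood of $\tilde g$, rather than merely at $\tilde g$. That is precisely where the openness of $\mathcal{V}_\mu$ and the continuity of $f$ combine: the minimizer remains $\mu(j)$ uniformly for $Q$ in a neighbourhood of $f(\tilde g)$, which pulls back to a neighbourhood of $\tilde g$. Beyond that, the invertibility of $I-\alpha P_\mu$ via the Neumann series and the verification that $G$ is bounded (its operator norm is controlled by $1+\alpha/(1-\alpha)=1/(1-\alpha)$, consistent with the Lipschitz constant in Theorem \ref{LipContTheo}) are routine.
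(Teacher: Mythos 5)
Your proof is correct and follows the same route the paper intends (the paper defers the proof to \cite{huang2019deceptive}, but its surrounding remarks --- the openness and connectedness of $f^{-1}(\mathcal{V}_\mu)$ and the observation that $f$ is ``piece-wise linear'' --- indicate exactly this argument): on $f^{-1}(\mathcal{V}_\mu)$ the $\min$ locks onto $\mu$, the fixed-point equation becomes the affine system you solve to get $f(\tilde g)(i,u)=\alpha P_{iu}^{T}(I-\alpha P_\mu)^{-1}\tilde g_\mu+\tilde g(i,u)$, and openness of the preimage makes the Fr\'echet remainder identically zero on a neighbourhood, with the operator norm of $G$ bounded by $1/(1-\alpha)$ as you note.
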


The proof of Lemma \ref{Diffentialf} is provided in \cite{huang2019deceptive}. We can see that $f$ is Fr\'echet differentiable on $f^{-1}(\mathcal{V}_\mu)$ and the derivative is constant, i.e., $f'(\tilde{g})=G$ for any $\tilde{g}\in f^{-1}(\mathcal{V}_\mu)$. Note that $G$ lies in the space of all linear mappings that maps $\mathbb{R}^{n\times |A|}$ to itself and $G$ is determined only by the discount factor $\alpha$ and the transition kernel $P$ of the MDP problem. The region where the differentiability may fail is  $f^{-1}(\mathbb{R}^{n\times |A|}\backslash (\cup_{\mu} \mathcal{V}_\mu))$, where $\mathbb{R}^{n \times |A|}\backslash (\cup_{\mu} \mathcal{V}_\mu)$ is the set $\{Q: \exists i, \exists u\neq u', Q(i,u)=Q(i,u')=\min_v Q(i,v) \}$. This set contains the places where a change of policy happens, i.e., $Q(i,u)$ and $Q(i,u')$ are both the lowest value among the $i$th row of $Q$. Also, due to the fact that $f$ is Lipschitz, by Rademacher's theorem, $f$ is differentiable almost everywhere (w.r.t. the Lebesgue measure).

\begin{remark}
One can view $f$ as a `piece-wise linear function' in the norm vector space $\mathbb{R}^{n\times |A|}$. Actually, if the adversary can only falsify the cost at one state-control pair, say $(i,u)$, while costs at other pairs are fixed, then for every $j\in S,v\in A$, the function $\tilde{g}(i,u)\mapsto [\tilde{Q}^*]_{j,v}$ is a piece-wise linear function.
\end{remark}

Given any $g \in f^{-1}(\mathcal{V}_\mu)$, if an adversary falsifies the cost $g$ by injecting value $h$, i.e., $\tilde{g}=g+h$, the adversary can see how the falsification causes a change in $Q$-values. To be more specific, if $Q^*$ is the $Q$-values learned from cost $g$ by $Q$-learning algorithm (\ref{QlearningUpdate}), after the falsification $\tilde{g}$, the $Q$-value learned from $Q$-learning algorithm (\ref{FalsifiedQLearning}) becomes $\tilde{Q}^*=Q^*+Gh$ if $\tilde{g}\in f^{-1}(\mathcal{V}_\mu)$. Then, an adversary who knows the system model can utilize (\ref{FrechetDerivative}) to find a way of falsification $h$ such that $\tilde{Q}^*$ can be driven to approach a desired set $\mathcal{V}_{\mu^\dag}$. One difficulty is to see whether $\tilde{g}\in f^{-1}(\mathcal{V}_\mu)$ because the set $f^{-1}(\mathcal{V}_\mu)$ is now implicit expressed. Thus, we resort to the following theorem.

\begin{theorem}\label{iffTheorem}
Let $\tilde{Q}^*\in\mathbb{R}^{n\times |A|}$ be the $Q$-values learned from the $Q$-learning algorithm (\ref{FalsifiedQLearning}) with the falsified cost $\tilde{g}\in\mathbb{R}^{n\times |A|}$. Then $\tilde{Q}^* \in \mathcal{V}_{\mu^\dag}$ if and only if the falsified cost signals $\tilde{g}$ designed by the adversary satisfy the following conditions
\begin{equation}\label{FalCostConds}
\tilde{g}(i,u)> (\mathbf{1}_i - \alpha P_{iu})^T(I-\alpha P_{\mu^\dag})^{-1}\tilde{g}_{\mu^\dagger}.
\end{equation}
for all $i\in S$, $u\in A \backslash \{\mu^\dag(i)\}$, where $\mathbf{1}_i\in\mathbb{R}^n$ a vector with $n$ components whose $i$th component is $1$ and other components are $0$.
\end{theorem}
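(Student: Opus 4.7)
The plan is to exploit the fixed-point characterization from Lemma~\ref{ConvergenceFalQ} to derive a closed-form expression for $\tilde{Q}^*$ under the hypothesis $\tilde{Q}^*\in\mathcal{V}_{\mu^\dag}$, and then recognize that the strict inequality defining $\mathcal{V}_{\mu^\dag}$ coincides with~(\ref{FalCostConds}).

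\textbf{Forward direction.} First I would assume $\tilde{Q}^*\in\mathcal{V}_{\mu^\dag}$. Then for every $j\in S$, the minimizer is known: $\min_v \tilde{Q}^*(j,v)=\tilde{Q}^*(j,\mu^\dag(j))$. Substituting into the fixed-point equation $\tilde{F}(\tilde{Q}^*)=\tilde{Q}^*$ removes the $\min$ and leaves, for the entries $(i,\mu^\dag(i))$, the linear system
\begin{equation*}
\tilde{Q}^*_{\mu^\dag}(i)=\tilde{g}(i,\mu^\dag(i))+\alpha\sum_j p_{ij}(\mu^\dag(i))\,\tilde{Q}^*_{\mu^\dag}(j),
\end{equation*}
which in vector form reads $\tilde{Q}^*_{\mu^\dag}=\tilde{g}_{\mu^\dag}+\alpha P_{\mu^\dag}\tilde{Q}^*_{\mu^\dag}$. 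Since $\alpha<1$ and $P_{\mu^\dag}$ is stochastic, $I-\alpha P_{\mu^\dag}$ is invertible (Neumann series), so $\tilde{Q}^*_{\mu^\dag}=(I-\alpha P_{\mu^\dag})^{-1}\tilde{g}_{\mu^\dag}$. Plugging this back for an arbitrary pair $(i,u)$ gives $\tilde{Q}^*(i,u)=\tilde{g}(i,u)+\alpha P_{iu}^T(I-\alpha P_{\mu^\dag})^{-1}\tilde{g}_{\mu^\dag}$ and $\tilde{Q}^*(i,\mu^\dag(i))=\mathbf{1}_i^T(I-\alpha P_{\mu^\dag})^{-1}\tilde{g}_{\mu^\dag}$. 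The defining inequality $\tilde{Q}^*(i,\mu^\dag(i))<\tilde{Q}^*(i,u)$ for $u\neq\mu^\dag(i)$ then rearranges precisely to~(\ref{FalCostConds}).

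\textbf{Reverse direction.} I would run the preceding computation backwards. Given~(\ref{FalCostConds}), I define a candidate $Q^\dag\in\mathbb{R}^{n\times|A|}$ by the formulas derived above, namely $Q^\dag(i,\mu^\dag(i))=\mathbf{1}_i^T(I-\alpha P_{\mu^\dag})^{-1}\tilde{g}_{\mu^\dag}$ and $Q^\dag(i,u)=\tilde{g}(i,u)+\alpha P_{iu}^T(I-\alpha P_{\mu^\dag})^{-1}\tilde{g}_{\mu^\dag}$ for $u\neq\mu^\dag(i)$. The hypothesis~(\ref{FalCostConds}) immediately yields $Q^\dag(i,\mu^\dag(i))<Q^\dag(i,u)$ for all $u\neq\mu^\dag(i)$, so $Q^\dag\in\mathcal{V}_{\mu^\dag}$ and consequently $\min_v Q^\dag(j,v)=Q^\dag(j,\mu^\dag(j))$. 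I then verify $\tilde{F}(Q^\dag)=Q^\dag$ in two cases: for $u=\mu^\dag(i)$ the identity follows by multiplying $\tilde{Q}^*_{\mu^\dag}=(I-\alpha P_{\mu^\dag})^{-1}\tilde{g}_{\mu^\dag}$ by $(I-\alpha P_{\mu^\dag})$; for $u\neq\mu^\dag(i)$ the definition of $Q^\dag(i,u)$ is exactly the one-step backup. By the uniqueness part of Lemma~\ref{ConvergenceFalQ}, $\tilde{Q}^*=Q^\dag\in\mathcal{V}_{\mu^\dag}$.

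\textbf{Main obstacle.} There is no real analytic obstacle here; the argument is essentially book-keeping once the $\min$ is collapsed using membership in $\mathcal{V}_{\mu^\dag}$. The only care needed is in the reverse direction, to avoid circularity: I must first establish that the \emph{candidate} $Q^\dag$ lies in $\mathcal{V}_{\mu^\dag}$ purely from~(\ref{FalCostConds}) (which is a statement about $\tilde{g}$, not about $\tilde{Q}^*$), and only then invoke uniqueness of the fixed point to conclude $\tilde{Q}^*=Q^\dag$. I should also note in passing that the $\beta$ appearing in the displayed inequality~(\ref{FalCostConds}) should read $\alpha$, to be consistent with the rest of the derivation and with the contraction constant used throughout the chapter.
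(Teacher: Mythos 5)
Your proof is correct and complete; the chapter itself omits the proof of this theorem (deferring it to the cited reference), but your argument --- collapsing the $\min$ via membership in $\mathcal{V}_{\mu^\dag}$, solving the resulting linear system to get $\tilde{Q}^*_{\mu^\dag}=(I-\alpha P_{\mu^\dag})^{-1}\tilde{g}_{\mu^\dag}$, and in the converse building the candidate fixed point from (\ref{FalCostConds}) before invoking the uniqueness in Lemma \ref{ConvergenceFalQ} --- is the standard and essentially the only natural route, with the non-circularity of the reverse direction handled exactly as it should be. You are also right that the $\beta$ in (\ref{FalCostConds}) should read $\alpha$.
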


With the results in Theorem \ref{iffTheorem}, we can characterize the set $f^{-1}(\mathcal{V}_\mu)$. Elements in $f^{-1}(\mathcal{V}_\mu)$ have to satisfy the conditions given in (\ref{FalCostConds}). Also, Theorem \ref{iffTheorem} indicates that if an adversary intends to mislead the agent to learn policy $\mu^\dag$, the falsified cost $\tilde{g}$ has to satisfy the conditions specified in (\ref{FalCostConds}).

If an adversary can only falsify at certain states $\tilde{S}\subset S$, the adversary may not be able to manipulate the agent to learn $\mu^\dag$. Next, we study under what conditions the adversary can make the agent learn $\mu^\dag$ by only manipulating the costs on $\tilde{S}$. Without loss of generality, suppose that the adversary can only falsify the cost at a subset of states $\tilde{S}=\{1,2,...,|\tilde{S}|\}$. We rewrite the conditions given in (\ref{FalCostConds}) into a more compact form:
\begin{equation}\label{FalCostCondsCompact}
    \tilde{g}_u \geq (I-\alpha P_u)(I-\alpha P_{\mu^\dag})^{-1} \tilde{g}_{\mu^\dag}, \forall\ u\in A,
\end{equation}
where $\tilde{g}_u = (\tilde{g}(1,u),\dots,\tilde{g}(n,u))$, $\tilde{g}_{\mu^\dag} = \left(\tilde{g}(1,\mu^\dag(1)), \tilde{g}(2,\mu^\dag(2)),\dots, \tilde{g}(n,\mu^\dag(n))\right)$ and $P_u = [P_u]_{i,j}=p_{ij}(u)$. The equality only holds for one component of the vector, i.e., the $i$-th component satisfying $\mu(i)=u$. Partition the vector $\tilde{g}_u$ and $\tilde{g}_{\mu^\dag}$ in (\ref{FalCostCondsCompact}) into two parts, the part where the adversary can falsify the cost denoted by $\tilde{g}^{fal}_u,\tilde{g}^{fal}_{\mu^\dag}\in\mathbb{R}^{|\tilde{S}|}$ and the part where the adversary cannot falsify $g_u^{true},g_{\mu^\dag}^{true}\in \mathbb{R}^{n-|\tilde{S}|}$. Then (\ref{FalCostCondsCompact}) can be written as 
\begin{equation}\label{PartionedCostCon}
\begin{bmatrix}
\tilde{g}^{fal}_u\\ \hdashline[2pt/2pt]
g_u^{true}
\end{bmatrix}
 \geq \left[
    \begin{array}{c;{2pt/2pt}c}
        R_u  & Y_u \\ \hdashline[2pt/2pt]
        M_u & N_u
    \end{array}
\right]
\begin{bmatrix}
\tilde{g}^{fal}_{\mu^\dag}\\ \hdashline[2pt/2pt]
g_{\mu^\dag}^{true}
\end{bmatrix},\ \forall\ u\in A,
\end{equation}
where 
$$
\left[
    \begin{array}{c;{2pt/2pt}c}
        R_u  & Y_u \\ \hdashline[2pt/2pt]
        M_u & N_u
    \end{array}
\right]\coloneqq
(I-\alpha P_u) (I-\alpha P_{\mu^\dag})^{-1},\ \ \forall\ u\in A
$$
and $R_u\in\mathbb{R}^{\tilde{S}\times \tilde{S}}, Y_u\in\mathbb{R}^{|\tilde{S}|\times (n-|\tilde{S}|)},M_u\in\mathbb{R}^{(n-|\tilde{S}|)\times |\tilde{S}|}, N_u\in\mathbb{R}^{(n-|\tilde{S}|)\times (n-|\tilde{S}|)}$.

If the adversary aims to mislead the agent to learn $\mu^\dag$, the adversary needs to design $\tilde{g}^{fal}_u,u\in A$ such that the conditions in (\ref{PartionedCostCon}) hold. The following results state that under some conditions on the transition probability, no matter what the true costs are, the adversary can find proper $\tilde{g}^{fal}_u,u\in A$ such that conditions (\ref{PartionedCostCon}) are satisfied. For $i\in S\backslash \tilde{S}$, if $\mu(i)=u$, we remove the rows of $M_u$ that correspond to the state $i\in S\backslash \tilde{S}$. Denote the new matrix after the row removals by $\bar{M}_u$.

\begin{proposition}\label{PartialStatesAttacks}
Define $H\coloneqq [\bar{M}_{u_1}'\ \bar{M}_{u_2}'\ \cdots\ \bar{M}_{u_{|A|}}']' \in \mathbb{R}^{(|A|(n-|\tilde{S}|)-|\tilde{S}|)\times |\tilde{S}|}$. If there exists $x\in\mathbb{R}^{|\tilde{S}|}$ such that $Hx<0$, i.e.,  the column space of $H$ intersects the negative orthant of $\mathbb{R}^{|A|(n-|\tilde{S}|)-|\tilde{S}|}$,  then for any true cost, the adversary can find $\tilde{g}^{fal}_u,u\in {A}$ such that conditions (\ref{PartionedCostCon}) hold.
\end{proposition}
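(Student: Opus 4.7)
The plan is to reduce the system (\ref{PartionedCostCon}) to a single matrix inequality in the lone vector $\tilde{g}^{fal}_{\mu^\dag}\in\mathbb{R}^{|\tilde{S}|}$, and then use the hypothesis $Hx<0$ to show this reduced system is feasible regardless of the true-cost vector. First I would split (\ref{PartionedCostCon}) into its upper block (constraints on $\tilde{g}^{fal}_u$ indexed by $i\in\tilde{S}$) and its lower block (constraints on $g^{true}_u$ indexed by $i\in S\setminus\tilde{S}$), for each $u\in A$. For any $i$ with $\mu^\dag(i)=u$ the corresponding row of $(I-\beta P_u)(I-\beta P_{\mu^\dag})^{-1}$ equals $\mathbf{1}_i^\top$ (since the $i$th rows of $P_u$ and $P_{\mu^\dag}$ coincide), so that row of (\ref{PartionedCostCon}) degenerates into the tautology $\tilde{g}(i,\mu^\dag(i))\geq \tilde{g}(i,\mu^\dag(i))$ (upper block) or $g(i,\mu^\dag(i))\geq g(i,\mu^\dag(i))$ (lower block) and may be discarded.

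Next I would argue that, once $\tilde{g}^{fal}_{\mu^\dag}$ is fixed, the remaining upper-block constraints can always be arranged. The surviving upper-block row indexed by $(i,u)$ with $i\in\tilde{S}$ and $u\neq\mu^\dag(i)$ reads
\begin{equation*}
\tilde{g}(i,u)\;\geq\;[R_u\,\tilde{g}^{fal}_{\mu^\dag}+Y_u\,g^{true}_{\mu^\dag}]_i,
\end{equation*}
and the variable $\tilde{g}(i,u)$ appears in no other binding constraint; the adversary simply picks it large enough. Thus feasibility of (\ref{PartionedCostCon}) reduces to feasibility of the surviving rows of the lower block, namely
\begin{equation*}
\bar{M}_u\,\tilde{g}^{fal}_{\mu^\dag}\;\leq\;g^{true}_u-N_u\,g^{true}_{\mu^\dag},\qquad\forall\,u\in A,
\end{equation*}
where $\bar{M}_u$ is $M_u$ with the rows for states $i\in S\setminus\tilde{S}$ satisfying $\mu^\dag(i)=u$ deleted, matching the definition preceding the proposition.

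I would then stack these $|A|$ vector inequalities into the single system $H\tilde{g}^{fal}_{\mu^\dag}\leq b$, where $H$ is the matrix of the statement and $b$ is the corresponding stack of $g^{true}_u-N_u\,g^{true}_{\mu^\dag}$ terms. By hypothesis there is $x\in\mathbb{R}^{|\tilde{S}|}$ with $Hx<0$ componentwise; setting $\tilde{g}^{fal}_{\mu^\dag}=tx$ yields $H\tilde{g}^{fal}_{\mu^\dag}=t\,Hx$, whose every component tends to $-\infty$ as $t\to\infty$. Hence for $t$ large enough (depending on $b$, i.e., on the true costs) one has $tHx\leq b$, so the stacked system is solvable. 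Feeding this $\tilde{g}^{fal}_{\mu^\dag}$ back into the upper-block discussion and choosing each free $\tilde{g}(i,u)$ large enough completes the construction of a falsification satisfying (\ref{PartionedCostCon}).

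The main obstacle I anticipate is purely bookkeeping: carefully tracking which rows of each block are tautological so that the reduced constraint matrix really is $H$ as defined, and verifying that the free coordinates $\{\tilde{g}(i,u):i\in\tilde{S},\,u\neq\mu^\dag(i)\}$ appear in disjoint rows so that they can be tuned independently of one another and of $\tilde{g}^{fal}_{\mu^\dag}$. Once this decoupling is made precise, the use of $Hx<0$ is a one-line scaling argument and the true cost plays no role beyond determining how large $t$ must be.
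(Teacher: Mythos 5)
Your proposal is correct and follows essentially the same route the paper intends: eliminate the tautological rows where $\mu^\dag(i)=u$, note the free upper-block variables $\tilde{g}(i,u)$ decouple, and reduce to the stacked system $H\tilde{g}^{fal}_{\mu^\dag}\leq b$, which is solved by the scaling $\tilde{g}^{fal}_{\mu^\dag}=tx$ with $t$ large — exactly the construction the paper's own remark describes. No gaps.
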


The proof can be found in \cite{huang2019deceptive}. Note that $H$ only depends on the transition probability and the discount factor, if an adversary who knows the system model can only falsify cost signals at states denoted by $\tilde{S}$, an adversary can check if the range space of $H$ intersects with the negative orthant of $\mathbb{R}^{|A|(n-|\tilde{S}|)}$ or not. If it does, the adversary can mislead the agent to learn $\mu^\dag$ by falsifying costs at a subset of state space no matter what the true cost is. 

\begin{remark}
To check whether the condition on $H$ is true or not, one has to resort to Gordan's theorem \cite{broyden2001theorems}:
Either $Hx<0$ has a solution $x$, or $H^T y =0$ has a nonzero solution $y$ with $y\geq 0$. The adversary can use linear/convex programming software to check if this is the case. For example, by solving
\begin{equation}\label{GordanMin}
\begin{aligned}
\min_{y\in\mathbb{R}^{|A|(n-|\tilde{S}|)}}\ \ \ &\Vert H^T y \Vert\ \  s.t.\ \ \ \ \Vert y \Vert=1,\ y\geq 0,
\end{aligned}
\end{equation}
the adversary can know whether the condition about $H$ given in Proposition \ref{PartialStatesAttacks} is true or not. If the minimum of (\ref{GordanMin}) is positive, there exists $x$ such that $Hx<0$. The adversary can select $\tilde{g}_{\mu^\dag}^{fal}=\lambda x$ and choose a sufficiently large $\lambda$ to make sure that conditions (\ref{PartionedCostCon}) hold, which means an adversary can make the agent learn the policy $\mu^\dag$ by falsifying costs at a subset of state space no matter what the true costs are.

\end{remark}

\section{Case Study}\label{CaseStudy}

Here, we consider TD learning on random walk. Given a policy $\mu$, an MDP can be considered as a Markov cost process, or MCP. In this MCP, we have $n=20$ states. The transition digram of the MCP is given in Fig. \ref{fig:RandomWalkDiagram}. At state $i=i_{k}$, $k=2,3,\dots,n-1$, the process proceed either left to $i_{k-1}$ or right to $i_{k+1}$, with equal probability. The transition at states from $i_2$ to $i_{n-1}$ is similar to symmetric one-dimensional random walk.  At state $i_{1}$, the process proceed to state $i_2$ with probability $1/2$ or stays at the same state with equal probability. At state $i_n$, the probabilities of transition to $i_{n-1}$ and staying at $i_n$ are both $\frac{1}{2}$. That is, we have $p_{i_ki_{k+1}}(\mu(i_k)) = p_{i_k i_{k-1}}(\mu(i_k)) =\frac{1}{2}$ for $k=2,3,\dots,n-1$, $p_{i_1i_1}=p_{i_1 i_2}=\frac{1}{2}$ and $p_{i_ni_n}=p_{i_ni_{n-1}}=\frac{1}{2}$. The cost at state $i_k$ is set to be $k$ if $k\leq 10$ and $21-k$ if $k>10$. That is
$$
g(i_k,\mu(i_k)) = \begin{cases}
k\ \ \ &\textrm{if }k\leq 10\\
21-k\ \ \ &\textrm{else}
\end{cases}.
$$
We consider the discount factor $\alpha=0.9$. The task here is to use approximate TD($\lambda$) learning algorithm to esitimate and approximate the cost-to-go function $J^\mu$ of this MCP.
We consider a linear parametrization of the form 
\begin{equation}\label{TDCaseArchitecture}
\tilde{J}(i,r)= r(3)i^2 + r(2) i + r(1),
\end{equation}
and $r=(r(1),r(2),r(3))\in\mathbb{R}^3$. Suppose the learning agent updates $r_t$ based on TD($\lambda$) learning algorithm (\ref{ApproximateTDupdate}) and (\ref{ApproximateEligibility}) and tries to find an estimate of $J^\mu$. We simulate the MCP and obtain a trajectory that long enough and its associated cost signals. We need an infinite long trajectory ideally. But here, we set the length of the trajectory to be $10^{5}$. We run respectively TD($1$) and TD($0$) on the same simulated trajectory based on rules given in (\ref{ApproximateTDupdate}) and (\ref{ApproximateEligibility}). The black line indicates the cost-to-go function of the MCP. The blue markers are the approximations of the cost-to-go function obtained by following the TD($\lambda$) algorithm (\ref{ApproximateTDupdate}) and (\ref{ApproximateEligibility}) with $\lambda=1$ and $\lambda=0$. We can see that $\tilde{J}_{TD(1)}$ and $\tilde{J}_{TD(0)}$ is a quadratic function of $i$ as we set in (\ref{TDCaseArchitecture}). Both $\tilde{J}_{TD(1)}$ and $\tilde{J}_{TD(0)}$ can serve a fairly good approximation of $J^\mu$ as we can see. The dimension of the parameters we need to update goes from $n=20$ in the TD($\lambda$) algorithm (\ref{TDLearningUpdates}) to $K=3$ in the approximation counterpart (\ref{ApproximateTDupdate}) which is more efficient computationally.

\begin{figure}[H]
    \centering
    \includegraphics[width=0.9\linewidth]{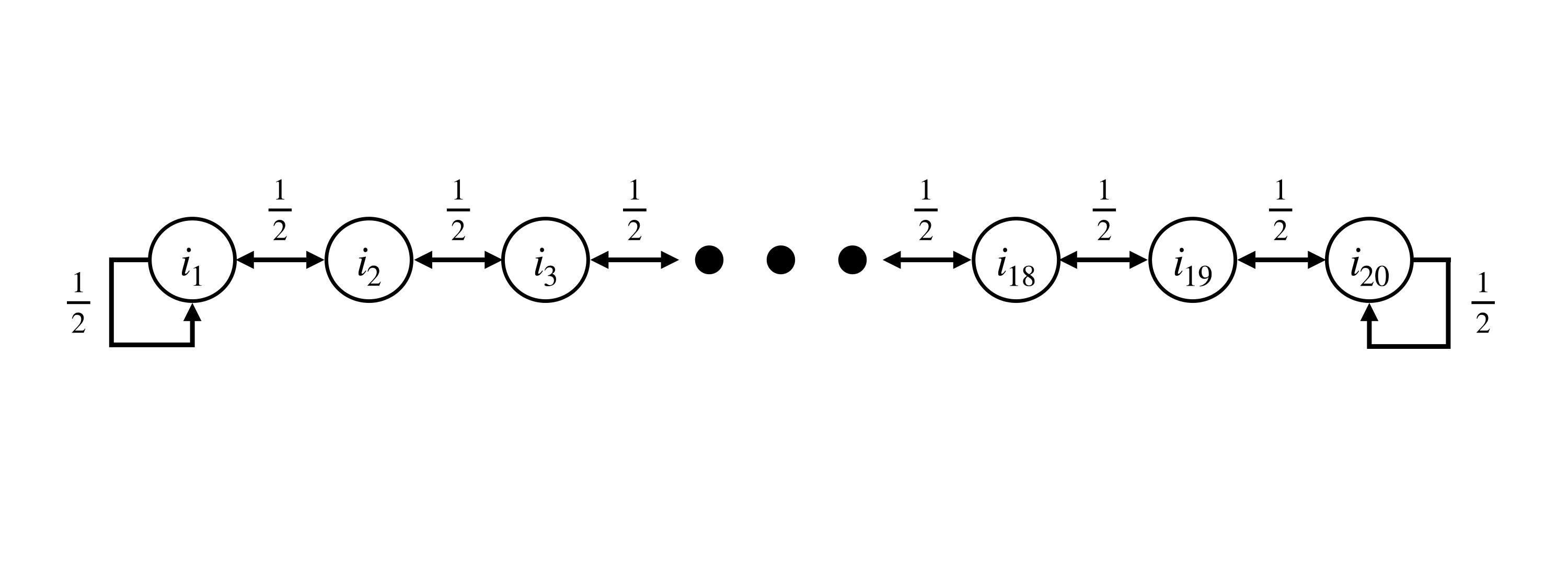}
    \caption{The diagram of the MCP task with $20$ states denoted by $i_1,\dots,i_{20}$.}
    \label{fig:RandomWalkDiagram}
\end{figure}

\begin{figure}
    \centering
    \includegraphics[width=0.9\linewidth]{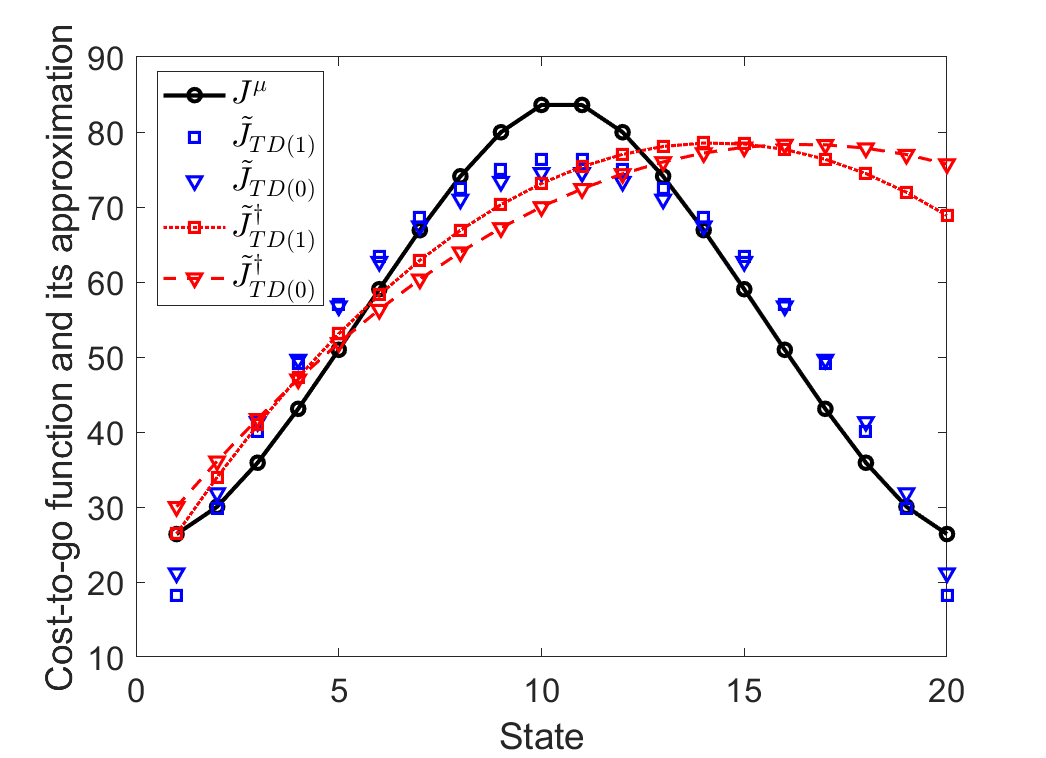}
    \caption{The cost-to-go function of a given policy, denoted by $J^\mu$; the approximations of the cost-to-go function under true cost signals which is marked blue, denoted by $\tilde{J}$; the approximations of the cost-to-go function under manipulated cost signals which is marked red, denoted by $\tilde{J}^\dag$; The subscript TD($1$) and TD($0$) denote the TD parameter $\lambda$ is set to be $1$ and $0$ respectively.}
    \label{fig:TDComparison}
\end{figure}

Suppose the adversary aims to deteriorate the TD($\lambda$) algorithm by stealthily manipulating the cost signals. Suppose the adversary can only manipulate the cost signals at state $i_{20}$ and the manipulated cost is $\tilde{g}(i_{20},\mu(i_{20}))=20$. We can see from Fig. \ref{fig:TDComparison} that the TD($\lambda$) learning under manipulated cost signals fails to provide accurate approximation of $J^\mu$. Although only the cost signal at one state is manipulated, the approximation of cost-to-go function at other states can also be largely deviated from the accurate value.

\begin{figure}
    \centering
    \includegraphics[width=0.9\linewidth]{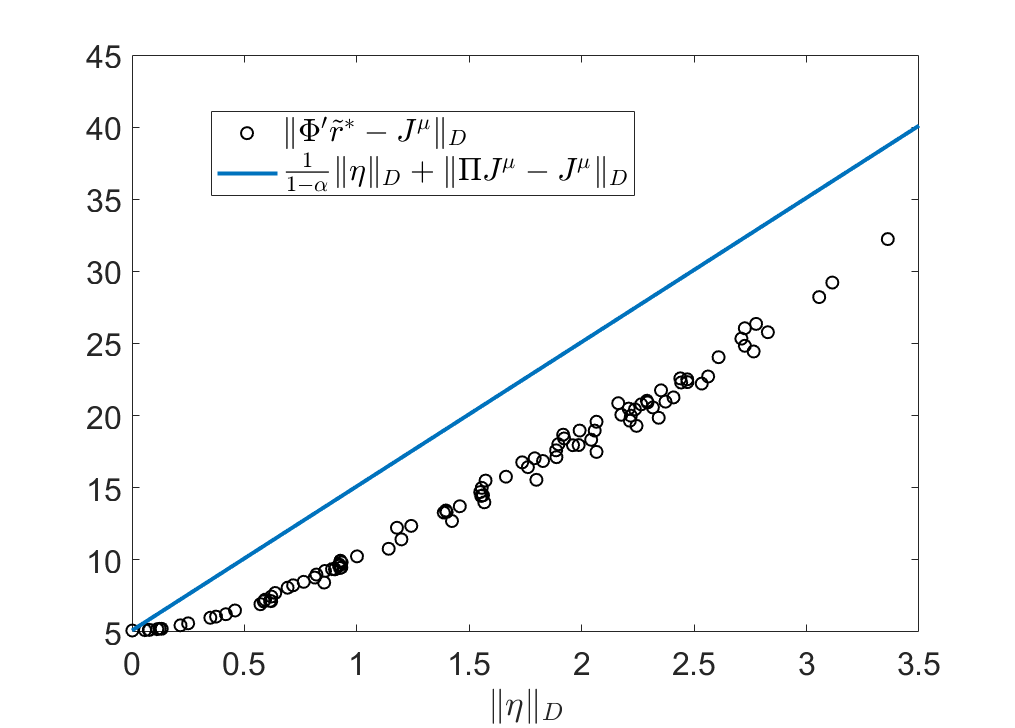}
    \caption{The approximated TD($1$) learned from $100$ random falsifications of the costs. For each falsification, we plot the distance of the falsification and the approximation error of its associated approximate of $J^\mu$. the blue line is a demonstration of the bound developed in (\ref{FalTDBound}). }
    \label{fig:BoundIllustration}
\end{figure}

We generate $100$ random falsifications denoted by $\eta$. Note that $\eta = \tilde{g} -g$. For each falsification, we plot the norm $\Vert \eta \Vert_D$ of $\eta$ and its associated approximation error $\Vert \Phi' \tilde{r}^* - J^\mu \Vert_D$ and the pair is marked by black circle in Fig. \ref{fig:BoundIllustration}. Note that $\tilde{r}^*$ is the parameter learned by the agent using TD($1$) learning algorithm (\ref{FalfisiedApproTDUpdate}) under the falsified costs. The blue line describes the map $\Vert \eta \Vert_D \mapsto \frac{1}{1-\alpha}\Vert \eta \Vert_D + \Vert \Pi J^\mu - J^\mu\Vert_D$. The results in Fig. \ref{fig:BoundIllustration} collaborate the results we proved in Theorem \ref{TDAttackBound}. This demonstrate how the falsification on cost signals in only one state can affect the approximated value function on every single state and how the resulted learning error is bounded.

For case study of $Q$-learning with falsified costs, one can refer to \cite{huang2019deceptive}.

\section{Conclusion}\label{conc}
In this chapter, we have discussed the potential threats in RL and a general framework has been introduced to study RL under deceptive falsifications of cost signals where a number of attack models have been presented. We have provided theoretical underpinnings for understanding the fundamental limits and performance bounds on the attack and the defense in RL systems. We have shown that in TD($\lambda$) the approximation learned from the manipulated costs has an approximation error bounded by a constant times the magnitude of the attack. The effect of the adversarial attacks does not depend on the choice of $\lambda$. In $Q$-learning, we have characterized a robust region within which the adversarial attacks cannot achieve its objective. The robust region of the cost can be utilized by both offensive and defensive sides. An RL agent can leverage the robust region to evaluate the robustness to malicious falsifications. An adversary can leverage it to assess the attainability of his attack objectives. Conditions given in Theorem \ref{iffTheorem} provide a fundamental understanding of the possible strategic adversarial behavior of the adversary. Theorem \ref{PartialStatesAttacks} helps understand the attainability of an adversary's objective.
Future work would focus on investigating a particular attack model and develop countermeasures to the attacks on cost signals.

\backmatter

\bibliographystyle{plainnat}
\bibliography{wiley}%

\nomenclature{$S$}{State space.}
\nomenclature{$A$}{Control space.}
\nomenclature{$g$}{Cost function(matrix).}
\nomenclature{$\tilde{g}$}{Compromised cost function(matrix).}

\nomenclature{$P$}{Transition Kernel.}

\nomenclature{$\alpha$}{Discounted factor.}
\nomenclature{$U(i)$}{Control set of state $i$.}
\nomenclature{$p_{ij}(u)$}{Transition probability from state $i$ to state $j$ under control $u$.}

\nomenclature{$\mu_k$}{Control policy at time $k$.}

\nomenclature{$i_k$}{State at time $k$.}

\nomenclature{$J^*$}{Value function of the MDP.}

\nomenclature{$\mu^*$}{Optimal policy.}

\nomenclature{$\tilde{J}$}{Apprxoimated Value function of the MDP.}

\nomenclature{$r$}{Approximation parameters.}

\nomenclature{$d_t$}{Temporal difference at time $t$.}

\nomenclature{$\gamma_t(i)$}{State-dependent non-nagative stepsize at time $t$.}

\nomenclature{$z_t(i)$}{State-dependent eligibility coefficient at state $i$.}

\nomenclature{$\phi$}{Basis functions of the approximation.}

\nomenclature{$\Phi$}{Basis functions in compact form.}

\nomenclature{$\eta_t$}{Eligibility vector at time $t$.}

\nomenclature{$D$}{Diagonal entries of steady-state probabilities.}

\nomenclature{$d(i)$}{steady-state probability at state $i$.}

\nomenclature{$P_\mu$}{Transition matrix given policy $\mu$.}

\nomenclature{$Q(i,u)$}{Q-factor of a state-control pair $(i,u)$.}

\nomenclature{$Q^*$}{Optimal Q-factors.}

\nomenclature{$\tilde{r}$}{Approximation parameters under compromised cost signals.}

\nomenclature{$\tilde{r}^*$}{Optimal approximation parameters under compromised cost signals.}

\nomenclature{$\Pi$}{Projection matrix.}

\nomenclature{$\tilde{\eta}$}{Difference between cost signals and compromised cost signals.}

\nomenclature{$\tilde{Q}_t$}{Q-factors at time $t$ under compromised cost signals.}

\nomenclature{$\tilde{Q}^*$}{Optimal Q-factors under compromised cost signals.}

\nomenclature{$\mu^\dag$}{A policy desired by the adversary.}

\nomenclature{$\mathcal{V}_\mu$}{A set of Q-factors that produces policy $\mu$.}

\nomenclature{$\tilde{F}(Q)$}{Right-hand side of Bellman's equation under compromised cost signals.}

\nomenclature{$f$}{A mapping from cost function to Q-factors.}

\nomenclature{$f^{-1}$}{The inverse of mapping $f$.}

\nomenclature{$L$}{A mapping from cost function to Q-factors.}

\nomenclature{$g_{\mu}$}{A vector of costs under policy $\mu$.}

\nomenclature{$P_{iu}$}{Transition probabilities at state $i$ under control $u$.}

\printnomenclature


\printindex

\end{document}